\documentclass[conference]{IEEEtran}
\usepackage{times}

\usepackage[numbers]{natbib}
\usepackage{graphicx}
\usepackage{epsfig} 
\usepackage{mathptmx}
\usepackage{amsmath} 
\usepackage{leftindex} 
\usepackage{bm}

\usepackage{amssymb}  
\usepackage{amsthm}
\usepackage{thmtools}
\usepackage{thm-restate}
\usepackage{wasysym}
\usepackage{mathrsfs}  
\usepackage{gensymb}
\usepackage{bbm}
\usepackage{color}
\usepackage{lipsum}
\usepackage[ruled,vlined,linesnumbered]{algorithm2e}
\usepackage[colorlinks=true,linkcolor=black,anchorcolor=black,citecolor=black,filecolor=black,menucolor=black,runcolor=black,urlcolor=black]{hyperref}
\usepackage{etoolbox}
\usepackage[table,xcdraw,dvipsnames]{xcolor}
\usepackage{multirow}
\usepackage{bbm}
\usepackage{listings}
\usepackage{svg}
\usepackage{subcaption}

\usepackage{siunitx} 

\usepackage[capitalise]{cleveref}
\crefname{equation}{Eq.}{Eqs.}
\crefname{figure}{Fig.}{Figs.}
\crefname{tabular}{Tab.}{Tabs.}
\crefname{section}{Sec.}{Secs.}
\crefname{algocf}{Alg.}{Algs.}
\crefname{algocfline}{Line}{Lines}
\crefname{lem}{Lemma}{Lemmata}
\usepackage{color, colortbl}

\SetKwInput{KwInput}{Input}
\SetKwInput{KwReturn}{Return}
\SetKwComment{Comment}{/* }{ */}

\usepackage{outlines}
\let\labelindent\relax
\usepackage{enumitem}
\setenumerate[2]{label=\alph*.}
\setenumerate[3]{label=\roman*.}

\usepackage{setspace}
\usepackage{xspace}

\makeatletter
\patchcmd{\@makecaption}
  {\scshape}
  {}
  {}
  {}
\makeatother

\newtheorem{defn}{Definition}
\newtheorem{rem}[defn]{Remark}
\newtheorem{lem}[defn]{Lemma}

\newtheorem{problem}[defn]{Problem}

\newcommand{\regtext}[1]{\mathrm{\textnormal{#1}}}
\newcommand{\ts}[1]{\textsuperscript{#1}}

\newcommand{\mc}[1]{\mathcal{#1}}
\newcommand{\lbl}[1]{_{\regtext{#1}}}
\newcommand{\idx}[1]{_{#1}}
\newcommand{\arridx}[1]{[#1]}

\newcommand{\local}{\lbl{local}}
\newcommand{\actual}[1]{(\jointangle\idx{#1})}

\newcommand{\st}{\regtext{ s.t. }}

\newcommand{\R}{\mathbb{R}}
\newcommand{\N}{\mathbb{N}}
\newcommand{\SO}{\regtext{SO}}

\newcommand{\vc}[1]{\mathbf{#1}} 
\newcommand{\mat}[1]{{\begin{bmatrix} #1 \end{bmatrix}}}

\newcommand{\zeros}{\vc{0}}

\newcommand{\eye}{\vc{I}}
\newcommand{\rotmat}{\vc{R}}
\newcommand{\transvec}{\vc{p}}

\newcommand{\pinv}{^{\dagger}}
\newcommand{\norm}[1]{\left\Vert#1\right\Vert}
\newcommand{\abs}[1]{\left\vert#1\right\vert}
\newcommand{\trans}{^{\top}}
\newcommand{\transbrac}[1]{\left[ {#1} \right]\trans}

\DeclareMathOperator*{\argmin}{arg\,min}
\DeclareMathOperator{\skewmat}{sk}
\DeclareMathOperator{\atantwo}{atan2}

\newcommand{\rodrigues}{\mc{R}}
\newcommand{\subproblem}[1]{\method{SP#1}}
\DeclareMathOperator{\normalize}{unit}

\newcommand{\metric}{\mu}
\newcommand{\similarity}{\metric\lbl{c}}
\newcommand{\matrixsimilarity}{\metric\lbl{m}}

\newcommand{\action}{\vc{a}}
\newcommand{\observation}{\vc{o}}
\DeclarePairedDelimiter{\ceil}{\lceil}{\rceil}

\newcommand{\method}[1]{\textnormal{\texttt{#1}}\xspace}
\newcommand{\ourmethod}{\method{SEW-Mimic}}
\newcommand{\basemink}{\method{MINK-IK}}
\newcommand{\boundjointangles}{\method{BoundJoints}}
\newcommand{\alignwrist}{\method{AlignWrist}}
\newcommand{\alignaxis}{\method{AlignAxis}}
\newcommand{\eulerangle}{\method{EulerDecomp}}
\newcommand{\makeframe}{\method{MakeFrame}}

\newcommand{\config}{\vc{q}}
\newcommand{\axis}{\vc{h}}

\newcommand{\jointangle}{q}

\newcommand{\robot}{\regtext{rb}}
\newcommand{\human}{\regtext{hm}}

\newcommand{\streaminput}{\regtext{in}}

\newcommand{\opt}{^{\star}}
\newcommand{\leftside}{\regtext{lf}}
\newcommand{\rightside}{\regtext{rt}}
\newcommand{\lside}[1][]{_{\leftside\if\relax\detokenize{#1}\relax\else,\,#1\fi}}
\newcommand{\rside}[1][]{_{\rightside\if\relax\detokenize{#1}\relax\else,\,#1\fi}}

\newcommand{\init}{\lbl{0}}
\newcommand{\des}{\lbl{des}}
\newcommand{\safe}{\lbl{safe}}
\newcommand{\interp}{\lbl{interp}}

\newcommand{\shoulder}{\vc{s}}
\newcommand{\elbow}{\vc{e}}
\newcommand{\wrist}{\vc{w}}
\newcommand{\tool}{\vc{t}}
\newcommand{\indexfinger}{\vc{p}\lbl{index}}
\newcommand{\pinkyfinger}{\vc{p}\lbl{pinky}}
\newcommand{\handorientation}{\vc{H}}
\newcommand{\toolorientation}{\vc{T}}
\newcommand{\upperarm}{\vc{u}}
\newcommand{\lowerarm}{\vc{l}}
\newcommand{\orthoframe}[1]{\rotmat\frms{0}{#1}}
\newcommand{\frameorigin}[1]{\transvec\frms{0}{#1}}

\newcommand{\safetyfilter}{\method{SEW-SafetyFilter}}
\newcommand{\FK}{\method{FK}}
\newcommand{\linspace}{\method{linspace}}
\newcommand{\capsulefunc}{\method{Capsule}}
\newcommand{\makecapsules}{\method{MakeCapsules}}
\newcommand{\collisioncheck}{\method{CollisionCheck}}
\newcommand{\XPBDupdate}{\method{XPBD-Iter}}
\newcommand{\XPBDLengthUpdate}{\method{XPBD-Iter-L}}
\newcommand{\upperarmset}{U}
\newcommand{\lowerarmset}{L}
\newcommand{\handset}{H}
\newcommand{\torsoset}{T}
\newcommand{\radius}{r}
\newcommand{\collisionvolumes}{\mathcal{C}}
\newcommand{\capsule}{C}

\newcommand{\weight}{w}
\newcommand{\length}{\ell}
\newcommand{\keypoints}{K}
\newcommand{\robotkeypoints}[1][]{\keypoints_{\regtext{rb}\if\relax\detokenize{#1}\relax\else,\,#1\fi}}

\newcommand{\policy}{\pi}

\newcommand{\recovertoolori}{\method{RecoverTool}}
\newcommand{\conttimecollision}{\method{FindFirstCollision}}

\newcommand{\frm}[1]{^{#1}}
\newcommand{\frms}[2]{^{#1,#2}}

\title{\LARGE \bf
A Closed-Form Geometric Retargeting Solver for \\
Upper Body Humanoid Robot Teleoperation
}

\author{
    \authorblockN{
        Chuizheng Kong\authorrefmark{2},
        Yunho Cho\authorrefmark{2},
        Wonsuhk Jung\authorrefmark{2}, 
        Idris Wibowo\authorrefmark{1}\authorrefmark{2},
        Parth Shinde\authorrefmark{1}\authorrefmark{6},
        Sundhar Vinodh-Sangeetha\authorrefmark{1}\authorrefmark{3},\\
        Long Kiu Chung\authorrefmark{4},
        Zhenyang Chen\authorrefmark{2},
        Andrew Mattei\authorrefmark{4},
        Advaith Nidumukkala\authorrefmark{4} \\
        Alexander Elias\authorrefmark{7},
        Danfei Xu\authorrefmark{2}\authorrefmark{5},
        Taylor Higgins\authorrefmark{8}, and
        Shreyas Kousik\authorrefmark{2}\authorrefmark{4}
    }
    \authorblockA{\authorrefmark{1} denotes equal contribution}
    \authorblockA{\authorrefmark{2} Institute for Robotics and Intelligent Machines}
    \authorblockA{\authorrefmark{3} School of Aerospace Engineering}
    \authorblockA{\authorrefmark{4} Woodruff School of Mechanical Engineering}
    \authorblockA{\authorrefmark{5} School of Interactive Computing\\ 
    Georgia Institute of Technology, Atlanta, Georgia 30332}
    \authorblockA{\authorrefmark{6} Qualcomm, San Diego, California 92121}
    \authorblockA{\authorrefmark{7} Standard Bots, Glen Cove, New York 11542}
    \authorblockA{\authorrefmark{8} Department of Mechanical \& Aerospace Engineering \\
    Florida A\&M University-Florida State University,
    Tallahassee, Florida 32130}
}

\begin{document}

\makeatletter
\let\@oldmaketitle\@maketitle
\renewcommand{\@maketitle}{\@oldmaketitle
    \bigskip
    \includegraphics[width=\linewidth]
    {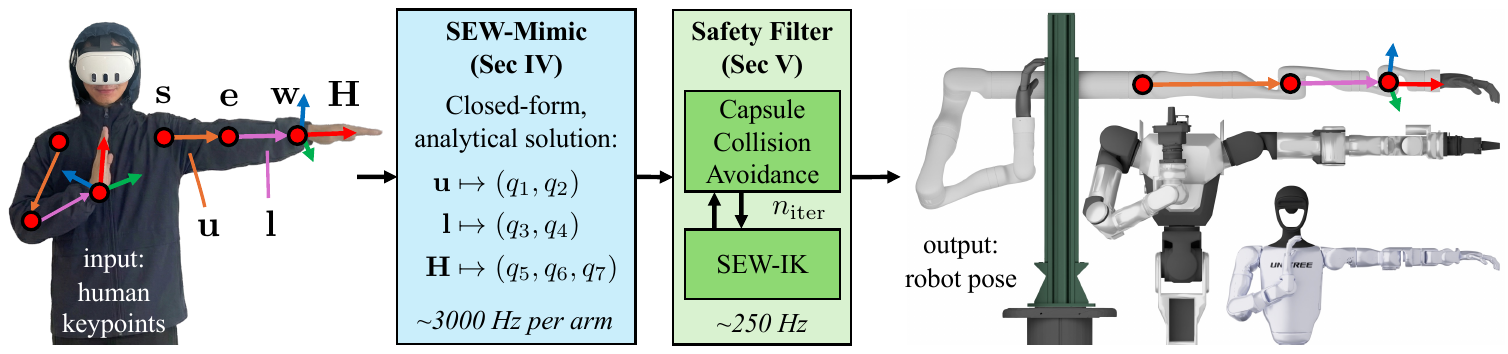}
    \captionof{figure}{We propose \ourmethod for retargeting human shoulder, elbow, and wrist (SEW) keypoints analytically to robot manipulator joint angles.
    Our key insight is to focus on limb and hand/tool \textit{orientation error} as a metric of pose similarity, because it enables a closed-form, provably-optimal retargeting solution that is scale independent, as shown here on Kinova Gen3, Rainbow RB-Y1, and Unitree G1 platforms (shown to scale).
    Since \ourmethod is computationally fast, we use it as an inverse kinematics (IK) solver within a safety filter to prevent self-collisions for bimanual teleoperation.}
    \label{fig: front figure}
    \vspace*{-0.75em}
    }
\makeatother

\maketitle
\thispagestyle{empty}
\pagestyle{empty}

\begin{abstract}
Retargeting human motion to robot poses is a practical approach for teleoperating bimanual humanoid robot arms, but existing methods can be suboptimal and slow, often causing undesirable motion or latency.
This is due to optimizing to match robot end-effector to human hand position and orientation, which can also limit the robot's workspace to that of the human.
Instead, this paper reframes retargeting as an orientation alignment problem, enabling a closed-form, geometric solution algorithm with an optimality guarantee.
The key idea is to align a robot arm to a human's upper and lower arm orientations, as identified from shoulder, elbow, and wrist (SEW) keypoints; hence, the method is called SEW-Mimic. 
The method has fast inference  (3 kHz) on standard commercial CPUs, leaving computational overhead for downstream applications; an example in this paper is a safety filter to avoid bimanual self-collision.
The method suits most 7-degree-of-freedom robot arms and humanoids, and is agnostic to input keypoint source.
Experiments show that SEW-Mimic outperforms other retargeting methods in computation time and accuracy.
A pilot user study suggests that the method improves teleoperation task success.
Preliminary analysis indicates that data collected with SEW-Mimic improves policy learning due to being smoother.
SEW-Mimic is also shown to be a drop-in way to accelerate full-body humanoid retargeting.
Finally, hardware demonstrations illustrate SEW-Mimic's practicality.
The results emphasize the utility of SEW-Mimic as a fundamental building block for bimanual robot manipulation and humanoid robot teleoperation.
\end{abstract}

\addtocounter{figure}{-1}

\section{Introduction}\label{sec: intro}

Retargeting is the process of mapping human poses to robot configurations. 
For robot teleoperation, this offers a human-centered way to control high degree-of-freedom (DoF) robots, by which training data can be collected for autonomous policies \cite{trilbmteam2025carefulexaminationlargebehavior}.
However due to physical differences between human and robot embodiments, such as size, DoFs, and motion constraints, the retargeting problem can be nontrivial.

As a result, most existing systems only track human hand motion \cite{jiang_dexmimicgen_2025, iyer_open_2024, ding_bunny-visionpro_2024, cheng_open-television_2024, unitreerobotics_xr_teleoperate}.
These methods solve retargeting as an end-effector inverse kinematics (IK) problem using robot's Jacobian Matrix \cite{khatib2003unified} whose inverse maps 6-DoF end-effector velocity to robot joint velocities (6-DoF / 7-DoF).
This approach prevents robot teleoperation near the limit of the robot's range of motion or near other joint singularity poses due to the inverse becoming degenerate. 
While degeneracy can be mitigated for 6-DoF arms using pseudo-inverse \cite{carpentier2019pinocchio} or optimization-based IK \cite{Zakka_Mink_Python_inverse_2025}, humanoid arms are typically 7-DoF to match human arms.
When using 6-DoF hand-only retargeting method, part of the 7-DoF arm corresponding to the human elbow joint can produce unwanted null-space motion \cite{elias_redundancy_2024}, increasing collision risk (see \Cref{fig: glass gap sew vs mink} in \Cref{sec: experiments}). 

To overcome limitations from hand-only retargeting, recent learning-based humanoid retargeting approaches optimize over robot joint angles to match the robot to human body keypoints (shoulder, elbow, wrist, ankles, etc.) \cite{ze_twist_2025, li2025clone, he_learning_2024}.
While minimizing the Euclidean distance between the human and robot keypoints includes the elbow, solving such a program can average 0.7 seconds \cite{ze_twist_2025}.
When used in real-time teleoperation for data collection, this delay can insert hesitant behavior, reducing the quality of trained policies. 

This paper addresses the issues of hands-only teleoperation and inference delays by proposing \ourmethod, a closed-form, provably-optimal, geometric (keypoint-based) retargeting algorithm (see \Cref{fig: front figure}). 
Unlike existing retargeting methods, \ourmethod does not use the robot's Jacobian matrix or iteratively solve optimization.
Instead, it performs pose-retargeting using geometric subproblems \cite{PadenM86,murray2017mathematical} similar to \cite{elias2025ik, elias_redundancy_2024, ostermeier_automatic_2025} that solve end-effector IK.
We define human limb vectors as unit vectors between keypoints (shoulder, elbow, wrist, etc.), then solve for robot joint angles that maximize \textit{pose similarity} by aligning these vectors with corresponding robot limb vectors (see \Cref{fig: robot and human arm diagram}).
Critically, we define pose similarity through \textit{orientation error}, as opposed to Euclidean error, which makes our method calibration-free between different human and robot sizes.

\subsubsection*{Contributions}
We make three key contributions:
\begin{itemize}
    \item We propose \ourmethod as a fast, optimal algorithm for upper body humanoid retargeting and teleoperation.

    \item We propose a safety filter that uses \ourmethod to avoid self collisions for bimanual teleoperation.

    \item We provide open-source, standalone applications to integrate \ourmethod with either MediaPipe \cite{lugaresi2019mediapipe} or a Meta Quest headset for teleoperation, and a live web demo to show computation speed (code and web demo to be released after review).
\end{itemize}
Furthermore, extensive experiments show that \ourmethod not only has high pose similarity and low computation speed, but also impacts teleoperation success rate, autonomous policy learning, and full-body humanoid retargeting speed.
We also demonstrate \ourmethod on several hardware platforms.

\subsubsection*{Paper Organization}
\Cref{sec: preliminaries} introduces notation and preliminaries.
\Cref{sec: method} then details our proposed retargeting problem and solution.
\Cref{sec: safety filter} develops a safety filter for teleoperation.
\Cref{sec: experiments} presents experiments evaluating our approach, and
\Cref{sec: demos} presents hardware demonstrations.
Finally, \Cref{sec: conclusion} provides concluding remarks and a discussion of limitations.
We also provide an extensive appendix.
\section{Related Work} \label{sec: related-works}

State-of-the-art bimanual and full-body robot teleoperation systems can be categorized into hardware puppeteering (\Cref{sec: related-works/hardware}) and software pose retargeting (\Cref{sec: related-works/software}).
We now review popular works in each category in terms of \textit{computation speed} and \textit{pose similarity} (orientation alignment between human and robot links per \Cref{sec: intro}).
Our approach uses analytical geometric IK, which we discuss last (\Cref{sec: related_works/analytical}).

\subsection{Hardware-based Teleoperation}\label{sec: related-works/hardware}

Hardware puppeteering pairs target robot arms with leader arm hardware such that a human operator directly provides joint angle commands to the robot (e.g., \method{ALOHA}~\cite{fu2024mobile} and \method{GELLO}~\cite{wu2023gello}).
This approach has high \textit{computation speed}, but low \textit{pose similarity} due to the embodiment gap of the operator directly controlling robot joints rather than the robot mimicking human arm poses. 
This issue can be solved with more human-centric leader arms functioning as exoskeleton cockpits \cite{ishiguro2020bilateral, yang2025ace, zhong2025nuexo, ben2025homie}, but such hardware impacts accessibility and applicability across a variety of robots.
We provide a software-based approach that targets similar pose similarity and computation speed to exoskeleton cockpits.

\subsection{Software-based Retargeting Teleoperation}\label{sec: related-works/software}

Software-based methods generally support more robot embodiments, but often lack \textit{computation speed} and/or \textit{pose similarity}.
Such methods usually obtain human motion from cameras or wearable devices and use inverse kinematics (IK) to retarget them to either end-effector motion or the entire arm.

\subsubsection{End-Effector Retargeting}\label{sec: related_works/iterative}

Methods in this category typically focus on an operator's 6D hand pose data (e.g., \method{DexMimicGen}~\cite{jiang_dexmimicgen_2025} and \method{OpenTeach}~\cite{iyer_open_2024}) and compute IK for the corresponding robot joint configurations with solvers such as Mink \cite{Zakka_Mink_Python_inverse_2025} and OSC \cite{khatib2003unified}; thus, they treat teleoperation as moving the two end-effectors freely in the 3D workspace. 
This approach, however, struggles when end-effector motion is limited by the arm configurations, especially for singularities at full extension that can cause reduce numerical stability and computation speed \cite{ostermeier_automatic_2025}.
By contrast, our method does not require a Jacobian and thus does not suffer numerical stability issues near singularities.
Other optimization-based IK methods avoid entering singularities poses in the first place (e.g., \method{BunnyVisionPro}~\cite{ding_bunny-visionpro_2024} and \method{Open-TeleVision}~\cite{cheng_open-television_2024} in Unitree's \method{xr\_teleoperate}~\cite{unitreerobotics_xr_teleoperate}) by using a weighted Jacobian pseudo-inverse (typically solved with Pinocchio \cite{carpentier2019pinocchio}).
However, for 7DoF humanoid arms, these methods exhibit non-cyclicity due to the redundant DoF \cite{elias_redundancy_2024}: when the robot end-effector returns to a previously visited pose, the elbow may not return to its original position.
To alleviate this, OSC \cite{khatib2003unified} allows the user to define a secondary objective in joint space (such as preserving initial joint angles) by projecting the joint error to the Jacobian nullspace.
However since we cannot directly map human elbow pose to the robot joint space without solving IK first, this approach still does not give the human operator direct control over the robot's elbow pose, further exacerbating a lack of \textit{pose similarity}.
By contrast, our method enables direct elbow control.


\subsubsection{Keypoint-Based Retargeting}\label{sec: related_works/data_driven}
To circumvent Jacobian singularities and handle complex full-body humanoid dynamics, several methods use customized optimization cost coupled with large human motion datasets (e.g., AMASS \cite{mahmood_amass_2019}) to train real-time retargeting and balance policies (e.g., \method{H2O}~\cite{he_learning_2024}, \method{CLONE}~\cite{li2025clone}, and \method{TWIST}~ \cite{ze_twist_2025}).
These methods exhibit high \textit{pose similarity}, but often still suffer large joint position errors that make dexterous tasks difficult compared to iterative methods \cite{jiang_dexmimicgen_2025}. 
Furthermore, these methods suffer high latency (around 0.7 second), some due to a combination of slow convergence of complex optimization programs and wireless communication; this can cause a teleoperator to teach a robot hesitant behaviors \cite{ze_twist_2025}.
While we do not address communication latency, our method enables high computation speed to help address the overall latency challenge. 
\ourmethod works as a drop-in replacement for General Motion Retargeting (\method{GMR}), which \method{TWIST} uses as a kinematic initial guess.
We treat each leg's hip-knee-ankle as shoulder-elbow-wrist to give \method{TWIST} similar retargeting accuracy to \method{GMR} but 1-3 orders of magnitude faster computation speed (see \Cref{subsec: exp: full body twist}).

\subsection{Analytical Geometric IK Solvers}\label{sec: related_works/analytical}

To avoid the above challenges stemming from hardware limitations, Jacobian singularities, and low computation speed, an alternative approach is to use analytical IK solvers, which typically focus on 6DoF manipulators.
Such methods have a long history \cite{pieper1969kinematics,paden1986thesis}, but have recently been improved by \method{ik-geo}~\cite{elias2025ik} and \method{EAIK}~\cite{ostermeier_automatic_2025}.
In particular, \cite{elias2025ik} provides convenient solutions to the Paden-Kahan subproblems \cite{paden1986thesis} that underlie analytical IK through closed-form, geometric decomposition.
A recent follow-up to \method{ik-geo}, \method{stereo-sew}~\cite{elias_redundancy_2024}, parameterizes the redundant joint of 7-DoF arms using elbow angles while applying subproblem decomposition for end-effector pose IK; this results in either a closed-form solution or a low-dimensional search, depending on the robot's morphology.
These approaches enable rapid computation speed, but focus on pure IK as opposed to retargeting for high pose similarity.
By contrast, \ourmethod directly addresses pose similarity while preserving high computation speed by always having a closed-form solution.
We note that considering an elbow angle for analytical IK is not new \cite{kreutz1992kinematic,asgari2025singularities,elias_redundancy_2024,salunkhe2025cuspidal}, but our SEW keypoint approach in retargeting is novel, to the best of our knowledge.
\section{Preliminaries}\label{sec: preliminaries}

We now introduce notation, model human and robot limbs, and review canonical geometric subproblems used as building blocks for our method.

\subsection{Notation}

\subsubsection*{Conventions}
We denote the real numbers as $\R$, the natural numbers as $\N$, and the space of 3-D rotation matrices as $\SO(3)$.
Scalars are in italic font, such as $x \in \R$.
Vectors and matrices are in bold font, such as $\vc{x} \in \R^n$ and $\vc{A} \in \R^{n\times m}$ with $n, m \in \N$.
The $n$-dimensional identity matrix is $\eye_n$.
An $n\times m$ array of zeros is $\zeros_{n\times m}$.
The pseudoinverse of a matrix $\vc{A}$ is $\vc{A}\pinv$.
For consistency with numbered robot joints, indexing starts at 1.
The $i^\regtext{th}$ element of a vector is $\vc{v}\arridx{i}$,  elements $i$ to $j$ are $\vc{v}\arridx{i:j}$, and the $(i,j)$th element of an array is $\vc{A}\arridx{i,j}$.
We concatenate vectors $\vc{a}, \vc{b}$ as $(\vc{a},\vc{b}) = [\vc{a}\trans,\vc{b}\trans]\trans$.
Subscripts indicate labels and superscripts indicate coordinate frames.

\subsubsection*{Common Operations}
Given a rotation axis $\axis \in \R^3$ and a rotation angle $\alpha \in \R$, we use Rodrigues' formula to create the corresponding rotation matrix:
\begin{align}\label{eq: rodrigues' formula}
    \rodrigues(\axis,\alpha) = \eye_3 + (\sin\alpha)\skewmat(\axis) + (1 -\cos\alpha)(\skewmat(\axis))^2,
\end{align}
where $\skewmat: \R^3 \to \R^{3\times 3}$ is the standard ``hat'' operator that returns a skew-symmetric matrix.
We normalize vectors to unit length as $\normalize(\vc{v}) = \vc{v} / \norm{\vc{v}}_2$.
For steps of an algorithm, we use $x \gets y$ to denote that variable $x$ has been assigned value $y$.

\subsubsection*{Coordinate Frames}
Denote the robot baselink frame as the $0^\regtext{th}$ frame.
We represent coordinate frames via a rotation matrix and translation vector with respect to the $0^\regtext{th}$ frame.
For example, frame ``f'' is $(\orthoframe{\regtext{f}},\frameorigin{\regtext{f}})$, and a vector $\vc{v}$ in that frame is $\vc{v}\frm{\regtext{f}}$.
For objects in the inertial frame, we omit the frame label when clear from context.
Given two frames $(\orthoframe{\regtext{a}},\frameorigin{\regtext{a}})$ and $(\orthoframe{\regtext{b}},\frameorigin{\regtext{b}})$, we transform from a to b in the standard way:
\begin{align}\label{eq: frame transformation}
    \vc{v}\frm{\regtext{b}} = [\orthoframe{\regtext{b}}]\trans
    (\orthoframe{\regtext{a}}\vc{v}\frm{\regtext{a}} - \frameorigin{\regtext{a}}) + \frameorigin{\regtext{b}}.
\end{align}
Numbered frames (typically indexed by $i$) refer to robot links, whereas other frames are in regular text (e.g. ``$\human$'' for the human and ``$\streaminput$'' for an 3-D keypoint input data stream).

\subsection{Human and Robot Arm Descriptions}

\begin{figure}[t]
    \centering
    \includegraphics[width=0.45\textwidth]{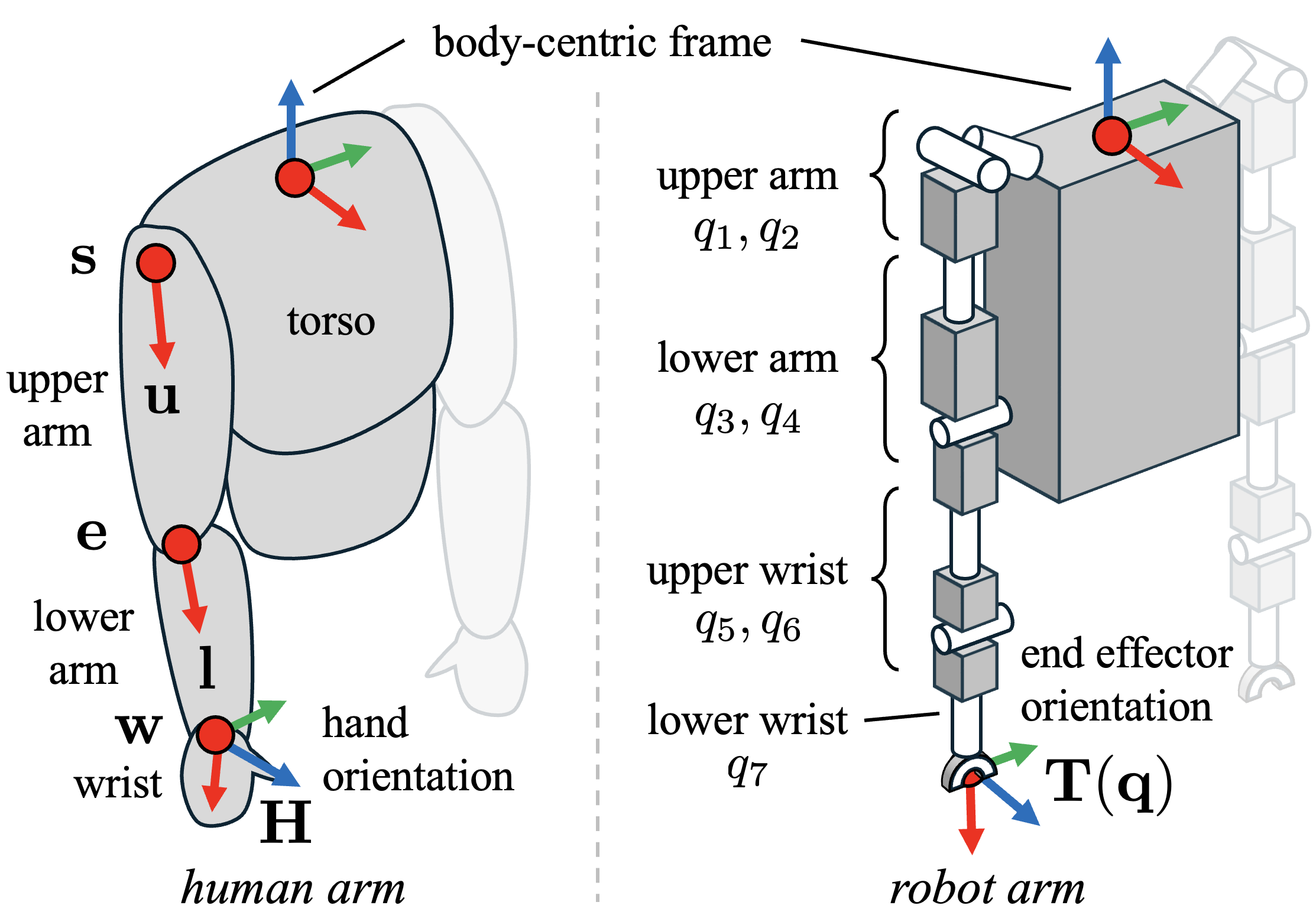}
    \caption{Comparison of human arm with keypoints and 7-DOF upper body robot arm showing links (boxes) and joints (cylinders show rotation axes).}
    \label{fig: robot and human arm diagram}
    \vspace*{-0.5cm}
\end{figure}

Our method retargets a human limb pose to a corresponding robot limb on a humanoid robot.
For ease of exposition, the majority of the paper focuses on a single limb (right arm).
The following concepts are shown in \Cref{fig: robot and human arm diagram}.

\subsubsection{Human Arm}
We consider a tuple of 3-D keypoints representing shoulder $\shoulder$, elbow $\elbow$, and wrist $\wrist$, plus a rotation matrix $\handorientation$ representing the orientation of the person's hand relative to the $0^\regtext{th}$ frame, as shown in \Cref{fig: robot and human arm diagram}.
That is, the input to our method is
    $(\shoulder,\elbow,\wrist,\handorientation) \in \R^3 \times \R^3 \times \R^3 \times \SO(3)$.
\begin{rem}[Body-Centric Frame]
Our notation implies all keypoints are in a shared $0$th \textit{body-centric frame}, which we take as an assumption going forward.
We discuss how to ensure this in \Cref{app: syncing human and robot body centric frames}.
\end{rem}

\subsubsection{Robot Arm}\label{subsubsec: robot arm preliminaries}
We consider a 7-DoF series kinematic chain of rotational actuators, where consecutive joints rotate perpendicular to each other, as shown in \Cref{fig: robot and human arm diagram}.
We define a rotation angle $\jointangle\idx{i}$ and rotation axis $\axis\frm{i}\idx{i}$ in each link’s local coordinate frame for each $i^\regtext{th}$ DoF, where $i = 1$ is the first joint, resulting in a pose vector
$\config = \mat{\jointangle\idx{1},\jointangle\idx{2},\cdots,\jointangle\idx{n}}\trans$.
Each $i^\regtext{th}$ joint is associated with a rigid link, with local coordinate frame represented in predecessor frame $i-1$ by a rotation matrix and translation vector $(\rotmat\local\frms{i-1}{i}, \transvec\local\frms{i-1}{i})$.
The orientation difference between the $(i-1)^\regtext{th}$ and $i^\regtext{th}$ local frames is the product between the known fixed local orientation difference and varying axis rotation from $\jointangle\idx{i}$:
$\rotmat\frms{i-1}{i}\actual{i} = 
    \rotmat\frms{i-1}{i}\local \rodrigues(\axis\idx{i}, \jointangle\idx{i})$.
Then the orientation difference between the $i^\regtext{th}$ and $j^\regtext{th}$ frames is given by a product of rotation matrices per standard rigid body kinematics.
To represent rotation axis $\axis\idx{i}$ in the $j^\regtext{th}$ local coordinate frame, we write
$\axis\idx{i}\frm{j} = \rotmat\frms{j}{i}(\config)\axis\frm{i}\idx{i}$.
Finally, we denote the robot's end effector orientation as
    $\toolorientation(\config) = \prod_{i=1}^{\regtext{\# DOFs}} \rotmat\frms{i-1}{i}(\config)\rotmat\lbl{align}$,
where $\rotmat\lbl{align}$ is a fixed transformation such that the end effector obeys a right-hand rule convention; in the case of a dexterous hand, ``X/Y/Z'' corresponds to index finger extended / palm normal / thumb extended, and ``X'' is the end effector pointing direction (i.e., normal to the flange).

\subsection{Canonical Geometric Subproblems}\label{subsec: paden's subproblems}

We solve retargeting by applying canonical geometric subproblems with closed-form solutions from \citet{elias2025ik}, which extend classical methods \cite{paden1986thesis,murray2017mathematical}.

\subsubsection*{Subproblem 1}\label{subsec: subproblem 1}
Consider two vectors $\vc{p}\idx{1}, \vc{p}\idx{2} \in \R^3$ and a unit vector $\vc{k} \in \R^3$, representing a rotation axis.
Rotate $\vc{p}\idx{1}$ about $\vc{k}$ to align it with $\vc{p}\idx{2}$.
That is, find the optimal angle $\theta\opt$ that minimizes $\norm{\rodrigues(\vc{k},\theta)\vc{p}\idx{1} - \vc{p}\idx{2}}$:
\begin{align}\begin{split}\label{eq: subproblem 1}
    \theta\opt \gets \subproblem{1}(\vc{p}\idx{1}, \vc{p}\idx{2}, \vc{u}) = \argmin_\theta \norm{\rodrigues(\vc{k},\theta)\vc{p}\idx{1} - \vc{p}\idx{2}}.
\end{split}\end{align}
We solve Subproblem 1 in closed form using \Cref{alg: subproblem 1} (in \Cref{app: subproblem algorithms}).

\subsubsection*{Subproblem 2}\label{subsec: subproblem 2}
Consider two vectors $\vc{p}\idx{1}, \vc{p}\idx{2} \in \R^3$ and two rotation axes $\vc{k}\idx{1},\vc{k}\idx{2} \in \R^3$.
Align $\vc{p}\idx{1}$ with $\vc{p}\idx{2}$ by simultaneously rotating $\vc{p}\idx{1}$ around $\vc{k}\idx{1}$ and rotating $\vc{p}\idx{2}$ around $\vc{k}\idx{2}$.
That is, find a pair of optimal rotation angles:
\begin{align}\begin{split}\label{eq: subproblem 2}
    \{(\theta\idx{1}\opt,\theta\idx{2}\opt)\idx{j}\}_{j=1}^2 &\gets\ \subproblem{2}(\vc{p}\idx{1}, \vc{p}\idx{2},\vc{k}\idx{1},\vc{k}\idx{2}) \\
    &= \argmin_{\theta\idx{1},\theta\idx{2}}
    \norm{\rodrigues(\vc{k}\idx{1},\theta\idx{1})\vc{p}\idx{1} - \rodrigues(\vc{k}\idx{2},\theta\idx{2})\vc{p}\idx{2}},
\end{split}\end{align}
where there may be $1$ or $2$ unique optimizers per joint angle pair.
We solve Subproblem 2 in closed form using \Cref{alg: subproblem 2} (in \Cref{app: subproblem algorithms}), which uses Paden-Kahan Subproblem 4 also presented in \Cref{app: subproblem algorithms}.

\section{Proposed Approach}\label{sec: method}

To detail our proposed approach, we first state retargeting as an orientation alignment problem (\Cref{sec: problem statement}), then present our \ourmethod algorithm (\Cref{sec: proposed method}), and finally show that our method is optimal (\Cref{sec: proving optimality}).

\subsection{Problem Statement}\label{sec: problem statement}

We seek to align an arbitrary robot arm with given human arm keypoints $(\shoulder,\elbow,\wrist,\handorientation)$ while accommodating the fact that the robot may have different link and limb lengths.
Thus, unlike other methods that focus on keypoint position error (potentially in weighted combination with orientation error \cite{araujo2025retargeting}), we focus entirely on orientation error between the human and robot upper arms, lower, arms, and wrists.
This orientation-focused approach has both theoretical and practical benefits: it enables an optimal geometric solution (\Cref{sec: proving optimality}) and applies directly to teleoperating robots with different sizes and proportions from humans (\Cref{sec: demos,sec: experiments}).

Before writing our problem, we define the human and robot upper arm, lower arm, and wrist as follows.
We convert input human keypoints to upper and lower arm unit vectors:
$\upperarm \gets \normalize(\elbow - \shoulder)$
and
$\lowerarm \gets \normalize(\wrist - \elbow)$.
For the robot, we use the $3^\regtext{rd}$ joint rotation axis $\axis\idx{3}$ as the upper arm direction, which we find applies for a variety of manipulator robots (see \Cref{sec: experiments}).
Similarly, we treat the robot's lower arm as its $5^\regtext{th}$ joint rotation axis $\axis\idx{5}$. 
Note that these human and robot vectors are generally given in two separate coordinate frames.
We propose a calibration-free procedure to sync the coordinate frames in \Cref{app: syncing human and robot body centric frames}.
For the wrist, we use the human hand orientation $\handorientation$ as a target for aligning the robot end effector orientation $\toolorientation(\config)$.
We assume the end effector mount normal is parallel to the final joint axis, and discuss the perpendicular case in \Cref{app: perpendicular wrist}.
Now we are ready to state our retargeting problem.

\begin{problem}\label{prob: retargeting problem}
We pose the retargeting problem as
\begin{align*}
    \min_{\config}\ 
        &\underbrace{\similarity(\upperarm,\rotmat\frms{0}{3}(\config)\axis\idx{3})^2}_{\regtext{upper arm}} +
        \underbrace{\similarity(\lowerarm,\rotmat\frms{0}{5}(\config)\axis\idx{5})^2}_{\regtext{lower arm}} +   
        \underbrace{\matrixsimilarity(
            \toolorientation(\config),
            \handorientation)^2}_{\regtext{wrist}}, \\
    \regtext{s.t.}\ &\config\ \regtext{obeys joint angle limits},
\end{align*}
where vector orientation error is given by cosine similarity
\begin{align}\label{eq: similarity metric}
    \similarity(\vc{u},\vc{v}) = \tfrac{1}{2} - 
        \tfrac{1}{2}\tfrac{\vc{u}\cdot\vc{v}}{\norm{\vc{u}}_2\cdot\norm{\vc{v}}_2} \in [0,1],
\end{align}
and rotation matrix orientation error is
\begin{align}\label{eq: matrix similarity metric}
    \matrixsimilarity(\rotmat\idx{1},\rotmat\idx{2}) =
        \tfrac{1}{2}
        \norm{\big(\rotmat\idx{1}\trans\rotmat\idx{2}\big)^\frac{1}{2} - \eye}\lbl{F} \in [0,1],
\end{align}
where $\norm{\cdot}\lbl{F}$ is the Frobenius norm.
\end{problem}
\noindent Note we use a minor modification of the standard chordal rotation matrix error metric \cite{hartley2013rotation}, where the square root scales the error to $[0,2]$ and makes it more linear near 0; this is not critical to our method, but provides nicer behavior for comparing errors between retargeting methods.
Next, we solve \Cref{prob: retargeting problem} by minimizing each cost term separately.

\subsection{Proposed Solution}\label{sec: proposed method}

\begin{figure}[t]
    \centering
    \includegraphics[width=\columnwidth]{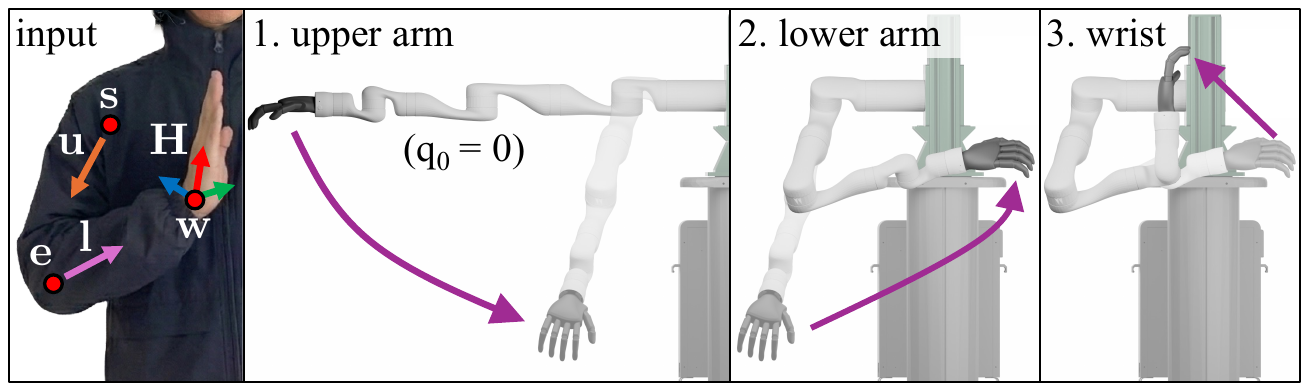}
    \caption{\ourmethod takes in human arm keypoints (left) and aligns the robot's upper arm, then lower arm, then wrist.
    }
    \vspace*{-1em}
    \label{fig: method diagram}
\end{figure}

Our closed-form geometric retargeting method operates iteratively down the arm from shoulder to elbow to wrist as shown in \Cref{fig: method diagram}, so we call it \ourmethod for ``Shoulder-Elbow-Wrist Mimic.''
Our approach is summarized in \Cref{alg: SEW-Mimic}. 
Here, we provide supporting explanation.

\subsubsection{Upper and Lower Arm Alignment}
We begin by creating the robot pose and upper and lower arm vectors (\Cref{line: init sew mimic sol}--\Cref{line: get upper and lower arms}).
Second, we align the robot's $\axis\idx{3}$ axis with the human upper arm $\upperarm$ (\Cref{line: align upper arm}).
Third, we similarly align $\axis\idx{5}$ with the human lower arm $\lowerarm$ (\Cref{line: align lower arm}).
For both of these alignment steps, we solve for two joint angles at a time using \Cref{alg: align axis}.

\subsubsection{Aligning Joint Axes to Given Vectors}
\Cref{alg: align axis} aligns the $i$th robot joint axis $\axis\idx{i}$ to a given vector $\vc{v}$ (i.e., minimize $\similarity(\axis\idx{i},\vc{v})$) by finding optimal angles for joints $i-1$ and $i-2$.
We operate in the $i-2$ coordinate frame (\Cref{line: start map to i-2 frame}--\Cref{line: finish map to i-2 frame}) map to that frame), and uses Subproblem 2 (\Cref{alg: subproblem 2}) to find the predecessor joint angles (\Cref{line: use subproblem 2}).
We filter out solutions that do not obey joint angle limits (\Cref{line: bound joint angles}); note, this does \textit{not} consider self-collision, which we handle with a safety filter in \Cref{sec: safety filter}.
Finally, we choose the closest solution to the robot's current configuration (\Cref{line: find closest joint angle}).

\subsubsection{Wrist Alignment}
We seek to have the tool orientation $\toolorientation(\config)$ match the desired hand orientation by solving for $(\jointangle\idx{5},\jointangle\idx{6},\jointangle\idx{7})$.
We assume the wrist has the end effector mount pointing parallel to the 7th joint axis per \Cref{subsubsec: robot arm preliminaries}.
Then, we apply Subproblem 2 to find $\jointangle\idx{5}$ and $\jointangle\idx{6}$ that align $\axis\idx{7}$ with the human hand pointing direction (\Cref{line: wrist a subproblem 2}), and finally Subproblem 1 to solve for $\jointangle\idx{7}$.


\begin{algorithm}[t]
\DontPrintSemicolon 
\caption{$\config \gets \ourmethod(\config\init,\shoulder,\elbow,\wrist,\handorientation)$}
\label{alg: SEW-Mimic}
\setstretch{1.1}
\KwInput{Initial robot pose $\config\init$, human shoulder position $\shoulder$, human elbow position $\elbow$, wrist position $\wrist$, and human hand orientation $\handorientation$}

// Sync human and robot frames per \Cref{app: syncing human and robot body centric frames} \newline
$\config \gets \config\init$  // Initialize solution \label{line: init sew mimic sol} \;

// Get upper and lower arm pointing directions \newline
$\upperarm \gets \normalize(\elbow - \shoulder)$ and
$\lowerarm \gets \normalize(\wrist - \elbow)$ \label{line: get upper and lower arms}\;

// Find $(\jointangle\idx{1},\jointangle\idx{2})$ by aligning $\axis\idx{3}$ (proxy for robot upper arm) with $\upperarm$ using Subproblem 2 per \Cref{alg: align axis} \newline
$\config\arridx{1:2} \gets \alignaxis(3,\config,\upperarm)$ \label{line: align upper arm}

// Find $(\jointangle\idx{3},\jointangle\idx{4})$ by aligning $\axis\idx{5}$ (proxy for robot lower arm) with $\lowerarm$ using Subproblem 2 per \Cref{alg: align axis} \newline
$\config\arridx{3:4} \gets \alignaxis(5,\config,\lowerarm)$ \label{line: align lower arm}

// Align wrist to get $\config\arridx{5:7}$ \newline
$\config\arridx{5:7} \gets \alignwrist(\config,\handorientation)$ // See \Cref{alg: align wrist} \label{line: align wrist} \;

\KwReturn{Retargeted robot pose $\config$}
\end{algorithm}

\begin{algorithm}[t]
\DontPrintSemicolon 
\caption{Align Joint Axis $i$ to a Given Vector \\
$(\jointangle\idx{i-2}\opt,\jointangle\idx{i-1}\opt) \gets \alignaxis(i,\config\init,\vc{v})$ \\
Align the $i$th joint axis $\axis\idx{i}$ with a vector $\vc{v}$ by solving for the axis' predecessor joint angles $\jointangle\idx{i-2}$ and $\jointangle\idx{i-1}$}
\label{alg: align axis}
\setstretch{1.1}

\KwInput{joint index $i$, current configuration $\config\init$, and vector to align with $\vc{v}$}

$\vc{v}\frm{i-2} \gets [\rotmat\frms{0}{i-2}(\config\init)]\trans \vc{v}$ // Put vector into $i-2$ frame  \label{line: start map to i-2 frame} \;

$\axis\idx{i-2}\frm{i} \gets \rotmat\frms{i-2}{i}(\config\init) \axis\idx{i}$ // Put joint axes in $i-2$ frame \;

$\axis\idx{i-2}\frm{i-1} \gets \rotmat\frms{i-1}{i}(\config\init) \axis\idx{i-1}$ \label{line: finish map to i-2 frame} \;

// SP2 aligns $\axis\idx{i}$ to $\vc{v}$ and returns up to 2 pairs of sols., indexed here by $j$ \;

$\{(\jointangle\idx{i-2},\jointangle\idx{i-1})_j\} \gets 
        \subproblem{2}(\vc{v}\frm{i-2},\axis\idx{i}\frm{i-2},-\axis\idx{i-2},\axis\idx{i-1}\frm{i-2})$ \label{line: use subproblem 2}\;

$\{(\jointangle\idx{i-2},\jointangle\idx{i-1})_j\} \gets \boundjointangles(\{(\jointangle\idx{i-2},\jointangle\idx{i-1})_j\})$ \label{line: bound joint angles}\;

// Update pose with closest angles to init. pose \newline
$(\jointangle\idx{i-2}\opt,\jointangle\idx{i-1}\opt) \gets \argmin_{a,b} \abs{\config\init\arridx{i-2} - a} + \abs{\config\init\arridx{i-1} - b}$ 
s.t. $(a,b) \in \{(\jointangle\idx{i-2},\jointangle\idx{i-1})_j\}$ \label{line: find closest joint angle} \;

\KwReturn{Joint angle solution $(\jointangle\idx{i-2}\opt,\jointangle\idx{i-1}\opt)$}
\end{algorithm}

\begin{algorithm}
\DontPrintSemicolon 
\caption{Align Robot Wrist to Hand Orientation \\
$(\jointangle\idx{5},\jointangle\idx{6},\jointangle\idx{7}) \gets \alignwrist(\config\init,\handorientation)$}
\label{alg: align wrist}
\setstretch{1.1}

 \KwInput{Init. pose $\config\init$, hand orientation $\handorientation$}

 // Require: $\rotmat\frms{7}{\toolorientation}\local$ (EE orientation in $7$th joint frame) \;

$\rotmat\frms{0}{7}\lbl{des} \gets \handorientation [\rotmat\lbl{align}]\trans[\rotmat\frms{7}{\toolorientation}\local]\trans$ // Desired orientation of 7th joint\;

    // Align $\axis\idx{7}$ with human hand using Subproblem 2 \newline
    $(\jointangle\idx{5},\jointangle\idx{6}) \gets \alignaxis(7,\config,\rotmat\frms{0}{7}\lbl{des}\arridx{:,1})$  \label{line: wrist a subproblem 2}
    
    // Put $\axis\idx{6}$ and desired direction in 7th joint frame \newline
    $\vc{u}\idx{6}\frm{7} \gets \rotmat\frms{7}{0} \rotmat\frms{0}{6}(\config\init) \axis\idx{6}$ \newline    
    $\axis\idx{6}\frm{7} \gets [\rotmat\frms{6}{7}\local]\trans \axis\idx{6}$ \;
    
    // Get $\jointangle\idx{7}$ using Subproblem 1 and enforce joint limits \newline
    $\jointangle\idx{7} \gets \boundjointangles(
    \subproblem{1}(\axis\idx{6}\frm{7}, \vc{u}\idx{6}\frm{7} , -\axis\idx{7}))$ \label{line: wrist a subproblem 1}

\KwReturn{Final joint angles $(\jointangle\idx{5},\jointangle\idx{6},\jointangle\idx{7})$}
\end{algorithm}

\subsection{Optimality of Proposed Algorithm}\label{sec: proving optimality}

To conclude the section, we confirm that \Cref{alg: SEW-Mimic} returns an optimal solution to \Cref{prob: retargeting problem}.

\begin{restatable}{prop}{myPropLabel}
\label{prop: sew-mimic is optimal}
    Consider a robot arm with consecutive perpendicular joint axes, baselink mounted in a humanoid configuration as shown in \Cref{fig: robot and human arm diagram}(b), and no joint angle limits or self-collisions.
    Suppose the robot is at an initial configuration $\config\init$.
    Suppose we are given keypoints of the human shoulder $\shoulder$, elbow $\elbow$, and wrist $\wrist$, and a hand orientation $\handorientation$.
    Using \Cref{alg: SEW-Mimic}, compute
    $\config \gets \ourmethod(\config\init,\shoulder,\elbow,\wrist,\handorientation)$.
    Then $\config$ is a global optimizer of \Cref{prob: retargeting problem}.
\end{restatable}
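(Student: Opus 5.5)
The argument shows that each of the three nonnegative cost terms in \Cref{prob: retargeting problem} is driven to zero (or to its own unconstrained minimum) by the corresponding step of \Cref{alg: SEW-Mimic}. It also shows that later steps never disturb what earlier steps achieved. Since the objective is a sum of nonnegative terms, a configuration that simultaneously minimizes every term individually is a global optimizer of the sum. The key structural fact is the kinematic chain: $\rotmat\frms{0}{3}(\config)\axis\idx{3}$ depends only on $(\jointangle\idx{1},\jointangle\idx{2})$, because rotation about joint 3's own axis leaves $\axis\idx{3}$ fixed. Likewise $\rotmat\frms{0}{5}(\config)\axis\idx{5}$ depends only on $\jointangle\idx{1},\dots,\jointangle\idx{4}$. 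The wrist term additionally depends on $\jointangle\idx{5},\jointangle\idx{6},\jointangle\idx{7}$. This triangular dependence is what makes the sequential greedy solution globally optimal.

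\textbf{Upper arm.} With $\jointangle\idx{1},\jointangle\idx{2}$ free and consecutive axes perpendicular, the reachable set of $\rotmat\frms{0}{3}\axis\idx{3}$ is the whole unit sphere. This is a standard two-axis pointing argument: $\axis\idx{3}\perp\axis\idx{2}$ sweeps a great circle about $\axis\idx{2}$, and rotating that circle about $\axis\idx{1}\perp\axis\idx{2}$ covers $S^2$. Hence $\subproblem{2}$ as called in \Cref{alg: align axis} returns an exact solution with zero residual, i.e.\ $\similarity(\upperarm,\rotmat\frms{0}{3}\axis\idx{3})=0$. I will invoke the correctness of \Cref{alg: subproblem 2} from \cite{elias2025ik}: it returns the minimizers of \eqref{eq: subproblem 2}, and the minimum is $0$ whenever the two rotated circles intersect, which they do here by the covering argument. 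Without joint limits, \boundjointangles removes nothing, and choosing the solution closest to $\config\init$ keeps optimality.

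\textbf{Lower arm and wrist.} Fixing $\jointangle\idx{1},\jointangle\idx{2}$, the same argument in frame $2$ with joints $3,4$ (perpendicular, and $\axis\idx{5}\perp\axis\idx{4}$) makes $\similarity(\lowerarm,\rotmat\frms{0}{5}\axis\idx{5})=0$. This leaves the upper-arm term untouched by the dependence noted above. For the wrist, joints $5,6$ align $\axis\idx{7}$, and hence the end-effector pointing direction by the parallel-mount assumption, with the first column of $\rotmat\frms{0}{7}\lbl{des}$. The remaining rotation needed is then about $\axis\idx{7}$ itself, and $\subproblem{1}$ recovers it exactly. So $\toolorientation(\config)=\handorientation$ and $\matrixsimilarity=0$, while joints $5$--$7$ leave the first two terms unaffected.

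\textbf{Main obstacle.} The delicate step is the claim that the $\subproblem{2}$ residual is exactly zero. This means verifying surjectivity onto $S^2$ of the two-joint map for each pair, including degenerate cases: the target parallel to the first axis, where there is a continuum of solutions, and the need for perpendicularity, which the hypothesis supplies. Equally delicate is confirming that the algorithm's frame conventions (the sign $-\axis\idx{i-2}$ and the frame changes) indeed pose the correct instance of \eqref{eq: subproblem 2}. I would handle this with an explicit spherical-coordinates parametrization for perpendicular axes. A secondary check is that, after aligning $\axis\idx{7}$, the Subproblem 1 step can match the full frame, which follows because $\rotmat\lbl{des}$ and the current frame then share the $\axis\idx{7}$ column and differ by a rotation about it.
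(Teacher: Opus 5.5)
Your proposal is correct and follows the same route as the paper's proof. Each nonnegative cost term is shown to vanish: the two arm terms via Subproblem 2, and the wrist term via Subproblem 2 (first column) followed by Subproblem 1 (rotation about $\axis\idx{7}$), so the sum attains its lower bound of zero. Your triangular-dependence remark and your two-perpendicular-axes covering argument make explicit what the paper leaves implicit: that later steps do not disturb earlier terms, and that the Subproblem 2 residual really is zero, which the paper's Subproblem 2 lemma asserts without justification.
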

\noindent See \Cref{app: optimality proof} for the proof.
Note that the solution is optimal \textit{in the body-centric frame}, meaning that human input torso motion must be canceled out (as per \Cref{app: syncing human and robot body centric frames}) before passing keypoints to \ourmethod.
Also note, our claim is in the absence of joint angle limits, safety filtering for self-collision avoidance, or joint controller tracking error.
That said, our experiments (\Cref{sec: experiments}) indicate a variety of benefits from this optimality.

\section{Safety Filter for Self-Collisions}\label{sec: safety filter}

To mitigate the risk of self-collision in bimanual teleoperation, we create a safety filter that takes advantage of our SEW representation.
We design it to remove only the harmful component of a user command, preserving tangential motion to collisions so the robot can be operated along safe directions.
While the filter does not provide formal guarantees (left to future work), we find it is fast ($\sim$250 Hz) and reduces self-collisions (see \Cref{subsec: exp: safety filter}).

\begin{rem}
    The key idea is that our fast closed-form solution leaves computation overhead for downstream processes such as safety filtering by serving as a drop-in, fast humanoid IK.
\end{rem}
\noindent Next, we summarize the method, then detail specific parts.

\subsubsection*{Algorithm Overview}
The safety filter is summarized in \Cref{alg: safety filter}.
Given an initial bimanual pose and a potentially-unsafe desired pose,
we first generate capsules from the robot's SEW keypoints in that configuration (\Cref{line: make capsules}).
We then adjust the keypoints to safety by applying extended position-based dynamics (XPBD)~\cite{macklin2016xpbd} (\Cref{line: safety filter xpbd update}).
Finally, we use \ourmethod to map the adjusted keypoints back to the robot's joint angles (\Cref{line: safety filter start recover config}--\Cref{line: safety filter end recover config}), where $\recovertoolori$ (\cref{alg: recover tool} in \Cref{app: safety filter details}) obtains the desired robot end-effector orientation.
In this context, \ourmethod functions as a fast IK solver.
We provide more details for each of these steps below.

\begin{algorithm}[ht]
\DontPrintSemicolon 
\caption{SEW Safety Filter for Self-Collisions \\
$\config\safe \gets \safetyfilter(\config\init,\config\des)$}
\label{alg: safety filter}
\setstretch{1.1}

\KwInput{Current pose $\config\init$, desired pose $\config\des$}

$\robotkeypoints \gets \FK(\config\des)$ // get keypoints of desired pose






$\collisionvolumes \gets \makecapsules(\robotkeypoints)$ 
// init. collision volumes
\label{line: make capsules}

// Init. XPBD Lagrange multipliers \newline
$\{\lambda\idx{i,j}\} \gets 0$ // one per $(i,j)$th collision pair in $\collisionvolumes\times\collisionvolumes$


\For{$k = 1,\cdots,n\lbl{iter}$ // \# of iterations}{

    // Nudge capsules w/ \Cref{alg: XPBD update} in \Cref{app: safety filter details}\newline
    $(\collisionvolumes,\{\lambda\idx{i,j}\}) \gets \XPBDupdate(\collisionvolumes,\{\lambda\idx{i,j}\})$ \label{line: safety filter xpbd update}


    \If{capsules in $\collisionvolumes$ are collision-free}{
        // Recover SEW keypoints from capsules\newline
        $(\shoulder\lside,\elbow\lside,\wrist\lside,\tool\lside,     
        \shoulder\rside,\elbow\rside,\wrist\rside,\tool\rside) \gets \method{GetKPs}(\collisionvolumes)$ \label{line: safety filter recover keypoints}

        // Recover tool orientations \label{line: safety filter start recover config}\newline
        $\handorientation\lside \gets 
        \recovertoolori(\toolorientation\lside(\config\init), \tool\lside)$\newline
        $\handorientation\rside \gets
        \recovertoolori(\toolorientation\rside(\config\init), \tool\rside)$

        // Recover left and right arm configurations \newline
        $\config\lside \gets \ourmethod(\shoulder\lside,\elbow\lside,\wrist\lside,\handorientation\lside)$ \newline
        $\config\rside \gets \ourmethod(\shoulder\rside,\elbow\rside,\wrist\rside,\handorientation\rside)$ \label{line: safety filter end recover config}
    
        \KwReturn{Safe pose $\config\safe \gets (\config\lside,\config\rside)$}
        }

}

\KwReturn{Original pose $\config\init$ if no safe pose found}
\end{algorithm}

\subsubsection*{Bimanual Notation}\label{subsec: bimanual notation}
We consider a left/right arm pair with pose $\config = (\config\lside,\config\rside)$ where $\config\lside$ is the left arm pose and $\config\rside$ right; we redefine $\config$ here to simplify exposition.
We define a tuple of \textit{robot keypoints}:
    $\robotkeypoints =
            (\shoulder\lside,\elbow\lside,\wrist\lside,\tool\lside,     
            \shoulder\rside,\elbow\rside,\wrist\rside,\tool\rside)
            \gets \FK(\config)$,
where $\FK$ denotes standard rigid body forward kinematics, $\shoulder\lside \in \R^3$ denotes robot left shoulder location (typically the 1st joint actuator location), and similarly $\elbow\lside$ for left elbow (4th joint actuator location), $\wrist\lside$ for left wrist (6th joint actuator location), $\tool\lside$ for left tool tip, and $\toolorientation\lside$ for left tool orientation; right side quantities are labeled ``$\rightside$.'' 

\subsubsection*{Capsule Collision Volumes}
We construct capsules (i.e., spheres swept along line segments) overapproximating the robot's links for collision checking, as shown in \Cref{fig: RBY1 safety filter capsules}.
A capsule $\capsule$ from $\vc{p}\idx{1}$ to $\vc{p}\idx{2}$ with radius $\radius$ is
\begin{align}\begin{split}
    \capsule &\gets \capsulefunc(\vc{p}\idx{1},\vc{p}\idx{2},\radius) \\
        &=
        \big\{\vc{x} \in \R^3 \mid\,  
                \norm{\vc{x} - t\vc{p}\idx{1} - (1-t)\vc{p}\idx{2}}_2 \leq \radius,\, 
                t \in [0,1]
        \big\}
\end{split}\end{align}
Thus, we overapproximate the robot's left upper arm with
$\upperarmset\lside \gets \capsulefunc(\shoulder\lside,\elbow\lside,\radius\lbl{upper})$
where $\radius\lbl{upper}$ is chosen large enough.
We similarly construct a capsule $\lowerarmset\lside$ to contain the lower arm and a capsule $\handset\lside$ to contain the robot wrist and end effector, and repeat for the right side.
We also create a capsule $\torsoset$ for the robot's torso.
To simplify notation, we gather the capsules into a set of collision volumes:
\begin{align*}\begin{split}
    \collisionvolumes \gets \makecapsules(\robotkeypoints)
    = \{\torsoset,\upperarmset\lside,\upperarmset\rside,\lowerarmset\lside,\lowerarmset\rside,\handset\lside,\handset\rside\}.
\end{split}\end{align*}
A concrete example of $\makecapsules$ is in \Cref{app: safety filter evaluation details}.



\subsubsection*{XPBD Iteration}
The filter uses XPBD~\cite{macklin2016xpbd} to push capsules out of collision and keypoints into obeying kinematic constraints, which we detail in \Cref{alg: XPBD update} in \Cref{app: safety filter details} and summarieze here.
Our XPBD implementation defines a Lagrange multiplier $\lambda\idx{i,j}$ per collision volume pair, computes a collision gradient by accumulating over contact normals of each collision volumes pair, then uses $\lambda\idx{i,j}$ with user-specified weights of each volume to move each keypoint.
Finally, we enforce kinematic constraints by projecting each link back to original length (see \Cref{alg: XPBD length projection} in \Cref{app: safety filter details}).

\subsubsection*{Continuous Time Collision Checking}
In practice, if the initial and desired pose are far apart, then the safety filter can find a safe pose that requires one robot limb to ``jump'' through another, which is unsafe.
We compensate for this by preprocessing the collision volumes (i.e., modifying \Cref{line: make capsules}) to find the first collision between the initial and desired poses, as detailed in \Cref{app: cont time coll check}.

\begin{figure}[t]
    \centering
    \includegraphics[width=\columnwidth]{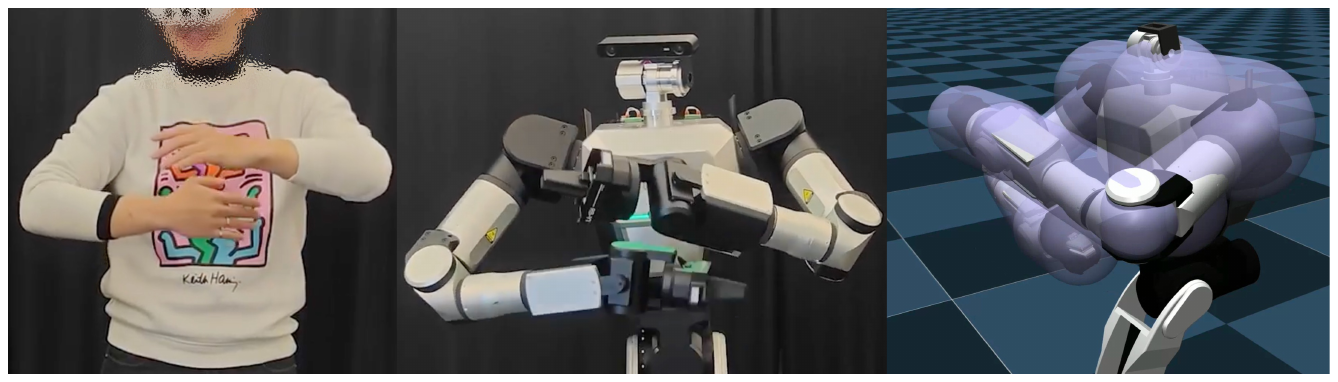}
    \caption{Our safety filter uses capsules (right) to compute and avoid self collision, as shown on Rainbow RBY1 hardware.}
    \label{fig: RBY1 safety filter capsules}
    \vspace*{-1em}
\end{figure}
\section{Experiments}\label{sec: experiments}

We now evaluate \ourmethod experimentally to answer the following questions:
\begin{outline}
    \1 (\Cref{subsec: exp: retargeting})
    How does our method compare against other retargeting approaches in terms of speed and accuracy?

    \1 (\Cref{subsec: exp: user study})
    How does our method compare in ease-of-use for teleoperation?

    \1 (\Cref{subsec: exp: safety filter})
    How effective is our safety filter?

    \1 (\Cref{subsec: exp: policy learning})
    Is our SEW representation useful for training generative robot motion policies?    

    \1 (\Cref{subsec: exp: full body twist})
    Is our SEW representation useful for full-body humanoid retargeting?    
\end{outline}

\subsection{Retargeting Performance}\label{subsec: exp: retargeting}

We investigate if \ourmethod has lower orientation error (as defined in \Cref{prob: retargeting problem}), absolute joint position (L2) error, and computation speed than baselines \cite{araujo2025retargeting} \cite{unitreerobotics_xr_teleoperate}.

\subsubsection{Experiment Design}
We evaluate \ourmethod per \Cref{alg: SEW-Mimic} and the baselines \cite{araujo2025retargeting} \cite{unitreerobotics_xr_teleoperate} for retargeting both left and right arms of the Ubisoft LAFAN1 \cite{harvey2020robust} dataset, processed to solely contain upper body motion.

\subsubsection{Results and Discussion}
The results are summarized in \Cref{fig: retargeting performance result}.
A Kruskal-Wallis test was conducted to evaluate the effect of retargeting solver on alignment error and inference time.
We see a statistically significant effect of solver on both alignment error ($H = 215864.09$, $p < 0.001$) and inference time ($H = 277254.30$, $p < 0.001$).
Dunn's post-hoc comparisons with Bonferroni correction confirm that \ourmethod achieves significantly lower alignment error ($\text{Median} = 1.57 \times 10^{-13}$, $\text{IQR} = [1.06 \times 10^{-13}, 1.66 \times 10^{-5}]$) and inference time ($\text{Median} = 0.587$ ms, $\text{IQR} = [0.573, 0.612]$ ms) than both \method{GMR} \cite{araujo2025retargeting} and \method{xr\_teleoperate} \cite{unitreerobotics_xr_teleoperate} ($p < 0.001$ for all pairwise comparisons).
Altogether, as expected given its closed-form and optimality, \ourmethod achieves greater accuracy and faster inference than the baselines, with alignment errors comparable to floating-point numerical precision and 2-3 orders of magnitude lower inference time. 
Note, \ourmethod does not beat \method{GMR} on its own custom retargeting objective, as expected  (see \Cref{app: retargeting exp details}); the point of this result is to confirm our optimality proof.

\begin{figure}[t]
    \centering
    \begin{subfigure}{0.99\columnwidth}
    \includegraphics[width=0.99\columnwidth]{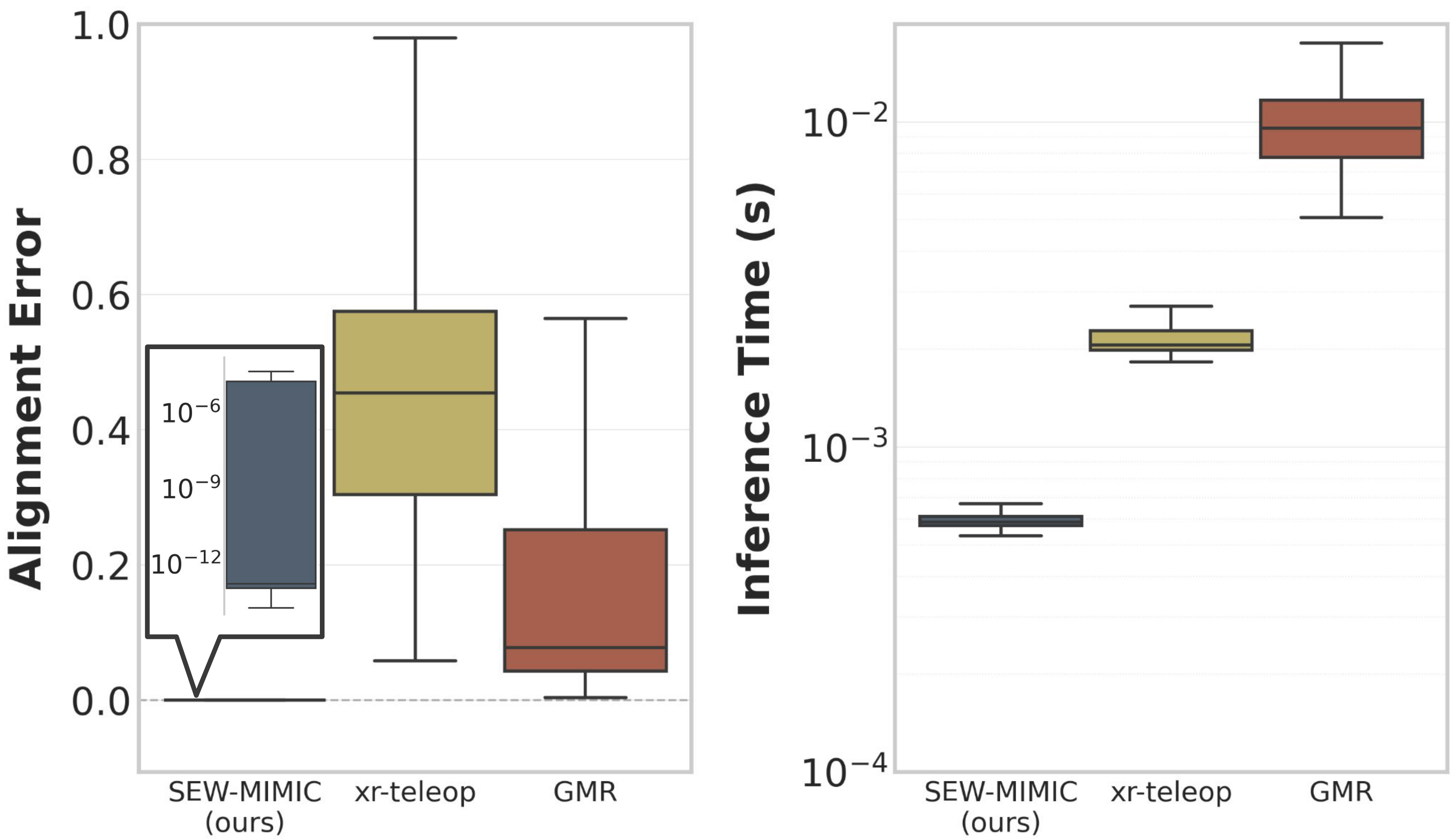}
    \end{subfigure}
    
    \caption{Retargeting alignment error and inference time of \ourmethod vs. baselines.}
    \label{fig: retargeting performance result}
    \vspace*{-0.5em}
\end{figure}

\subsection{Pilot User Study}\label{subsec: exp: user study}

We investigate if \ourmethod enables users to complete bimanual manipulation tasks more successfully than end-effector control with \basemink \cite{Zakka_Mink_Python_inverse_2025}. 

\subsubsection{Experiment Design}
We arrange a $2 \times 3 $ factorial design repeated measures (within subjects) user study ($N=8$) with the teleoperation solver and specific manipulation tasks as independent variables.
Per user and task, we measure total attempts, successes, and failures. 
Users teleoperate a Dual Kinova Gen3 robot in Robosuite \cite{zhu2020robosuite} using \ourmethod and \basemink on three tasks (see \Cref{app: user study details}, \Cref{fig: task environment showcase}):
\begin{outline}
    \1 \textit{Cabinet}:
    Move a box from a table into a raised cabinet.

    \1 \textit{Glass Gap}:
    Pick a box from between two rows of wine glasses and place it to the right (\Cref{fig: glass gap sew vs mink}).

    \1 \textit{Handover}: Move a box from start to goal, where the start and goal are far apart, necessitating a bimanual handover.
\end{outline}
The order of tasks and teleoperation methods is randomized per user.
Users wear a Meta Quest3 VR headset to which we stream binocular vision.
Users are given 5 minutes per task and controller, and are instructed to achieve as many successes as possible in the allotted time.
Further details are in \Cref{app: user study details}.

\subsubsection{Results and Discussion}

An Aligned Rank Transform (ART) ANOVA was conducted to evaluate the effect of task type and teleoperation solver on total attempts, total successes, and total failures.
We see statistically significant interaction between teleoperation methods \ourmethod and \basemink across number of successes ($p<0.01$), and number of failures ($p<0.01$).
Additionally, task had a significant effect ($p<0.05$) across all comparisons.
That said, post-hoc simple main effects analysis did not indicate statistically significant interaction when comparing the same task across different controllers for any dependent variable. 
Note our raw data are presented in \Cref{tab: user study data} in the appendix.

The statistically significant interaction between methods across all dependent variables suggests that \ourmethod has an effect on user performance.
The significant effect of task on all dependent variables indicates our tasks are all different, supporting our evaluation procedure.
A considerable limitation was the small sample size, which we suspect is why there is no significant effect of method on any one task, only in aggregate across all tasks; further study is needed on how \ourmethod affects user performance.
Qualitatively, we noticed an interesting failure mode for \basemink: given the fixed scaling factor between user and robot end effector displacement, users with shorter arms were unable to reach the box and thus could not succeed.
Finally, the raw data are promising in an informal sense that warrants further investigation: \ourmethod has 125 total successes summed across all tasks and users, versus 68 for \basemink.

\begin{figure}[t]
    \centering
    \includegraphics[width=0.99\columnwidth]{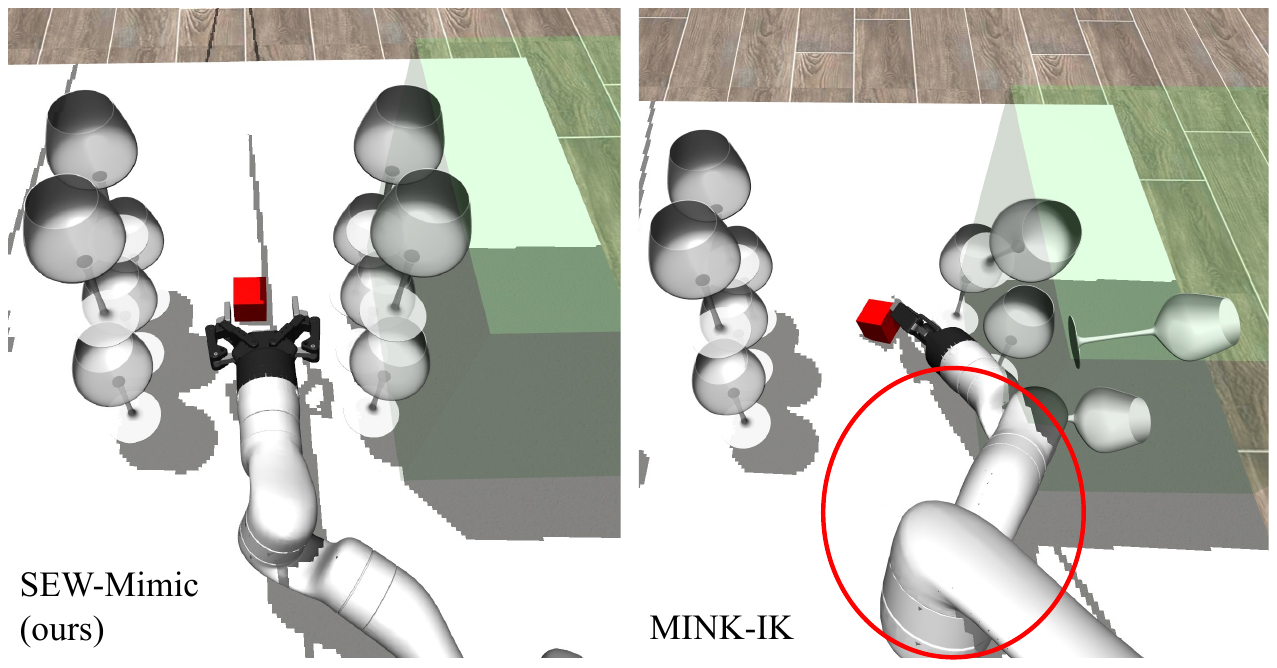}
    \caption{The Glass Gap task requires moving a red box from between wine glasses to a large green goal on the left.
    We show an example pose from our user study showing \ourmethod on the left vs. \basemink on the right, which shows how a lack of explicit elbow control can cause task failure from unexpected joint self-motion.}
    \label{fig: glass gap sew vs mink}
    \vspace*{-0.75em}
\end{figure}

\subsection{Safety Filter Ablation Study}\label{subsec: exp: safety filter}

We now investigate how our safety filter influences operation of \ourmethod near self-collisions.

\subsubsection{Experiment Design}
To create near self-collision poses, we record a continuous ``rolling punch'' motion of both human arms completing 10 full rotations in 10 seconds in front of the chest (\cref{fig: RBY1 safety filter capsules}).  
We use \ourmethod to retarget this motion to the RB-Y1 robot and measure the alignment error, number of self collisions, and computation speed with and without the proposed safety filter.
Further details in \Cref{app: safety filter evaluation details}.

\subsubsection{Results and Discussion}

    
See \Cref{fig: safety filter abolition} in \Cref{app: safety filter evaluation details}
\ourmethod without safety filter has zero alignment error, about 0.8 ms (1250 Hz) computation time for both arms, and self-collision in nearly half of the retargeted poses.
With the safety filter, we have on average higher alignment error of 0.019, longer computation time of about 4.0 ms (250 Hz), but much lower self-collision instances in only around 1.3\% of the retargeted poses.
Thus we conclude that our safety filter can reduce self-collision instances with some compromise in pose similarity and computation speed.
This tradeoff between accuracy and safety can be adjusted by each link's capsule size; we leave finding optimal tradeoffs to future work.

\subsection{Policy Learning}\label{subsec: exp: policy learning}

We now explore if data collected with \ourmethod results in a policy that has higher success rate and lower task completion time than data collected with end-effector-only motion and \basemink \cite{Zakka_Mink_Python_inverse_2025}.

\subsubsection{Experiment Design}
To test the hypothesis, we collect two matched demonstration datasets on the Glass Gap task (\Cref{fig: glass gap sew vs mink}): one using \basemink and one using \ourmethod.
Each dataset contains 50 demonstrations.
Across demonstrations, the box is initialized at random locations between the wine glasses.
To control for demonstration quality, both datasets are collected by the same expert tele-operator.
We then train Diffusion Policy \cite{chi2025diffusion,reuss2023diffusion} separately on each dataset. For each dataset, we run 3 training seeds for 500 epochs and select the best-performing checkpoint for final evaluation. During evaluation, we sample 250 initial conditions and compute the task success rate, reporting results across the three seeds.

\subsubsection{Results and Discussion}
The policy trained on demonstrations collected by \ourmethod achieves a substantially higher success rate (approximately $3\times$) than the policy trained on \basemink demonstrations, while achieving shorter task completion time (see \Cref{fig: glass_gap_eval} in the appendix).
We suspect the performance gap is related to demonstration smoothness, which we investigate in \Cref{app: policy learning details}.
In particular, \ourmethod data shows smoother joint angle trajectories and lower desired joint velocities (see \Cref{fig:per_joint_comparison}).
This is associated with lower action prediction error (see \Cref{fig:policy_learning_action_prediction_error}), despite shorter teleoperation horizons (see \Cref{fig:demo_collection_time}). Establishing whether these factors causally explain the gap requires further investigation and is deferred to future work.

\subsection{Full-Body Retargeting}\label{subsec: exp: full body twist}

Finally, we explore if \ourmethod improves a pretrained full-body humanoid retargeting policy's computational efficiency without negatively impacting accuracy.

\subsubsection{Experiment Design}
We apply \ourmethod as a drop-in replacement for the \method{GMR} retargeter \cite{araujo2025retargeting} in the \method{TWIST} full-body teleoperation framework \cite{ze_twist_2025} by treating each leg's hip-knee-ankle as shoulder-elbow-wrist.
\method{GMR} generates a kinematic initial guess of joint angles, which \method{TWIST} converts into a dynamically feasible pose (passed to a PD controller for a Unitree G1 humanoid in MuJoCo).
We do not retrain or finetune \method{TWIST} (which is trained on data generated by \method{GMR}).
We measure three metrics for \ourmethod and \method{GMR}: inference time, kinematic-to-dynamic ``kin2dyn'' joint angle error between the input and output of \method{TWIST}, the PD controller joint angle tracking error.
Further details are in \Cref{app: exp: full body twist details}.

\subsubsection{Results and Discussion}
\ourmethod performs 1-3 orders of magnitude faster than \method{GMR}, with no increase in either kin2dyn error or tracking error, per \Cref{fig: sew-twist errors}.
This is a remarkable result, because \ourmethod optimizes for a different metric than \method{GMR}, and is not designed for full-body retargeting, yet operates as a satisfactory drop-in accelerator for \method{TWIST}.

\begin{figure}[t]
    \centering
    \includegraphics[width=0.99\linewidth]{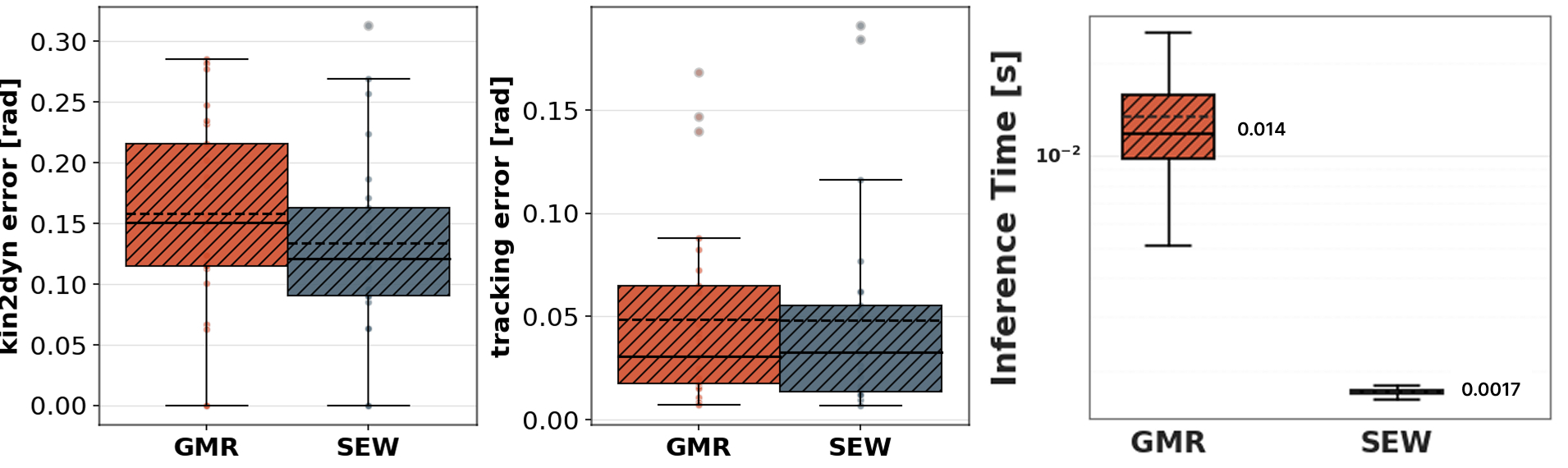}
    \caption{
    Kinematic-to-dynamic error, controller tracking error, and inference time when using \ourmethod and \method{GMR} in \method{TWIST}.
    } 
    \label{fig: sew-twist errors}
    \vspace*{-1em}
\end{figure}

\section{Demonstrations}\label{sec: demos}

We demonstrate \ourmethod on a Rainbow Robotics RB-Y1 and Kinova Gen3 dual arm systems in the video supplement.
Qualitatively, our method enables fast robot motion, avoids self-collision, and operates smoothly near singularities.
Hardware setup details are explained in \Cref{app: hardware setup}.
We test on motions such as snatching a wooden block, rolling punches (see \Cref{subsec: exp: safety filter}), crossing arms, and reaching full extension.
\section{Conclusion and Limitations}\label{sec: conclusion}

This paper presents \ourmethod, a closed-form geometric retargeting method for upper body humanoid robots.
The method optimally maps human to robot pose with respect to limb orientations, and is used to implement a fast safety filter for self collisions.
Experiments show that \ourmethod is much faster and more accurate than other retargeting baselines, has a significant effect on task success in a pilot user study, has potential for future policy learning, and improves full-body humanoid retargeting speed.

\subsubsection*{Limitations}
As is, \ourmethod is kinematic and constrained to humanoid morphologies (inapplicable for standard mounting of tabletop manipulators).
Our optimality proof ignores joint angle limits and self collisions; proof is potentially impossible if these factors are included.
Our method's accuracy also depends on human keypoint detection accuracy, so is susceptible to perception errors (though these may be mitigated by filtering and high computation speed).
The proposed safety filter, while fast and effective in practice, also lacks a correctness proof; one solution could be incorporating recent provably-correct collision avoidance methods \cite{schepp2022sara,michaux2024sparrows,wei2025collision,michaux2025can}.
Another major limitation (seen in demonstrations) is remaining latency due to wireless communication and tracking control.
This may be mitigated by incorporating feedforward velocity, which puts more pressure on the safety filter to correct future unsafe poses, versus just current poses.
Finally, a larger and more detailed user study is needed to elucidate how and why \ourmethod may increase teleoperation performance.

\bibliography{references}

\clearpage

\appendix
\subsection{Canonical Geometric Subproblem Algorithms}\label{app: subproblem algorithms}

Here we present closed-form geometric methods (see \Cref{alg: subproblem 1,alg: subproblem 2,alg: subproblem 4}) to solve Subproblems 1, 2, and 4 following the method of \cite{elias2025ik}.
First, we present Subproblem 4, which is a dependency of \Cref{alg: subproblem 2} to solve Subproblem 2.
The subproblems themselves are illustrated by example in \Cref{fig: subproblem examples}.

\begin{figure}[ht]
    \centering
    \includegraphics[width=0.99\columnwidth]{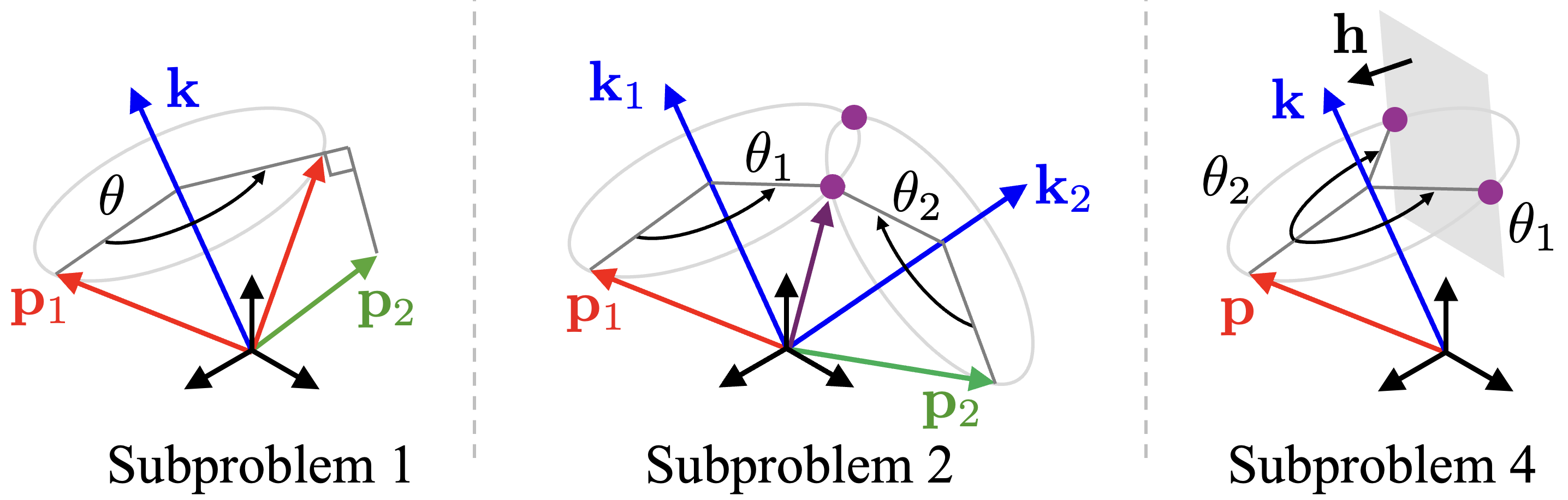}
    \caption{Examples of Subproblems 1, 2, and 4.
    For Subproblems 2 and 4, the two solution cases are shown.
    Adapted from \cite{elias2025ik}.}
    \label{fig: subproblem examples}
\end{figure}

\subsubsection*{Subproblem 4}\label{subsec: subproblem 4}
Consider a vector $\vc{p}$, a unit vector $\vc{k}$ defining a rotation axis, and a unit vector $\vc{h}$ defining a plane offset from the origin by a distance $d$.
Rotate $\vc{p}$ about $\vc{k}$ by an angle $\theta$ such that the vector $\rodrigues(\vc{k},\theta)\vc{p}$ is either on, or as close as possible to, the plane:
\begin{align}\begin{split}\label{eq: subproblem 4}
    \{\theta\opt\idx{i}\} \gets\ \subproblem{4}(\vc{p},\vc{h},\vc{k},d) = \argmin_{\theta} \abs{\vc{h}\trans\rodrigues(\vc{k},\theta)\vc{p} - d},
\end{split}\end{align}
which returns either $1$ or $2$ optimizers depending on whether or not $\vc{p}$ can be rotated to touch the plane.
We solve Subproblem 4 in closed form using \Cref{alg: subproblem 4} (in \Cref{app: subproblem algorithms}).

\begin{algorithm}[h]
\DontPrintSemicolon 
\caption{Subproblem 1 \\
$\theta\opt \gets \subproblem{1}(\vc{p}\idx{1}, \vc{p}\idx{2}, \vc{k})$ \\
Find an angle $\theta\opt$ that aligns $\vc{p}\idx{1}$ with $\vc{p}\idx{2}$ by rotating $\vc{p}\idx{1}$ about $\vc{k}$}
\label{alg: subproblem 1}
\setstretch{1.1}

\KwInput{$\vc{p}\idx{1}, \vc{p}\idx{2}, \vc{k}$}

// Project vectors onto plane perpendicular to $\vc{k}$ and find rotation angle in that plane

$\hat{\vc{p}}\idx{1} \gets \normalize(\vc{p}\idx{1} - (\vc{p}\idx{1}\cdot\vc{k})\vc{k})$

$\hat{\vc{p}}\idx{2} \gets \normalize(\vc{p}\idx{2} -(\vc{p}\idx{2}\cdot\vc{k})\vc{k})$

$\theta\opt \gets 2 \cdot \atantwo(
    \norm{\hat{\vc{p}}\idx{1} - \hat{\vc{p}}\idx{2}},
    \norm{\hat{\vc{p}}\idx{1} + \hat{\vc{p}}\idx{2}})$\;

\If{$\vc{k}\trans\left( \vc{\hat{p}}\idx{1} \times \vc{\hat{p}}\idx{2} \right) < 0$}{
    $\theta\opt \gets -\theta\opt$ // Correct sign of rotation if needed \;
}
\KwReturn{$\theta\opt$}
\end{algorithm}

\begin{algorithm}[ht]
\DontPrintSemicolon 
\caption{Subproblem 2 \\
$\{(\theta\idx{1}\opt,\theta\idx{2}\opt)\idx{j}\}_{j=1}^2 \gets \subproblem{2}(\vc{p}\idx{1}, \vc{p}\idx{2},\vc{k}\idx{1},\vc{k}\idx{2})$ \\
Find a set of angles to rotate vector $\vc{p}\idx{1}$ about axis $\vc{k}\idx{1}$ and vector $\vc{p}\idx{2}$ about axis $\vc{k}\idx{2}$ to minimize the 2-norm error between the rotated vectors}
\label{alg: subproblem 2}
\setstretch{1.1}

\nl \KwInput{$\vc{p}\idx{1}, \vc{p}\idx{2}, \vc{k}\idx{1}, \vc{k}\idx{2}$}

\nl // Normalize input vectors \;

$\hat{\vc{p}}\idx{1} \gets \normalize(\vc{p}\idx{1})$
and
$\hat{\vc{p}}\idx{2} \gets \normalize(\vc{p}\idx{2}) $\;

\nl // Use Subproblem 4 for each angle; the first angle solution set is indexed by $i$ and the second by $j$ and each may have 1 or 2 solutions \;

$\{\theta\idx{1,1}\opt, \theta\idx{1,2}\opt\} \gets \subproblem{4}(\vc{k}\idx{2}, \hat{\vc{p}}\idx{1}, \vc{k}\idx{1}, \vc{k}\idx{2}\trans \hat{\vc{p}}\idx{2})$ \;

$\{\theta\idx{2,1}\opt, \theta\idx{2,2}\opt \} \gets \subproblem{4}(\vc{k}\idx{1}, \hat{\vc{p}}\idx{2}, \vc{k}\idx{2}, \vc{k}\idx{1}\trans \hat{\vc{p}}\idx{1})$ \;

\nl // Return set of 1, 2 solutions \;

\KwReturn{$\{ 
(\theta\idx{1,1}\opt,\theta\idx{2,2}\opt), 
(\theta\idx{1,2}\opt,\theta\idx{2, 1}\opt) \}$}
\end{algorithm}

\begin{algorithm}[ht]
\DontPrintSemicolon 
\caption{Subproblem 4 \\
$\{\theta\opt\idx{j}\} \gets \subproblem{4}(\vc{p},\vc{h},\vc{k},d)$ \\
Find a set of angles $\{\theta\opt\idx{j}\}$ by which to rotate vector $\vc{p}$ about axis $\vc{k}$ to minimize the distance of the rotated vector to a plane with normal $\vc{h}$ offset from the origin by distance $d$}
\label{alg: subproblem 4}
\setstretch{1.1}
 \KwInput{$\vc{p},\vc{h},\vc{k},d$}

 // Define plane parallel to the circle traced by rotating $\vc{p}$ about $\vc{k}$, into which we project to find a solution; columns of $\vc{F}$ are a basis of this plane \;

$\vc{F} \gets \mat{\skewmat(\vc{k})\vc{p} \ ,\  -\skewmat(\vc{k})^2\vc{p}}$ \;

 // Set up linear system $\vc{A}\vc{x} = b$ where $\vc{A} \in \R^{1\times 2}$ and $\vc{x} = [\sin\theta,\cos\theta]\trans$ to allow solving for $\theta$\;

$\vc{A} \gets \vc{h}\trans \vc{F}$ and
$b \gets d - \vc{h}\trans\vc{k}\vc{k}\trans\vc{p}$ \;

$\vc{x} \gets \vc{A}\pinv b$ // Least-squares solution \;

// If least squares solution is not on the circle, two unique angles can rotate $\vc{p}$ to contact the plane \;

\eIf{$\norm{\vc{A}}_2^2 > b^2$}
{
    // Offset the least-squares solution $\vc{x}$ in the nullspace of $\vc{A}$ to get solutions on the circle \;
    
    $z \gets (\norm{\vc{A}}_2^2 - b^2)^{1/2}$ \;
    
    $\vc{x}\lbl{+}$ and $\vc{x}\lbl{--} \gets \vc{x} \pm z \mat{\vc{A}\arridx{2} \ ,\ 
    -\vc{A}\arridx{1}}$ \;
    
    $\theta\opt\lbl{+} \gets \atantwo(\vc{x}\lbl{+}\arridx{1},\vc{x}\lbl{+}\arridx{2})$ \;
    
    $\theta\opt\lbl{--} \gets \atantwo(\vc{x}\lbl{--}\arridx{1},\vc{x}\lbl{--}\arridx{2})$ \;
    
    \KwReturn{$\{\theta\opt\lbl{+}, \theta\opt\lbl{--}\}$}
}{
    // Least-squares solution is the only solution \;
    
    $\theta\opt \gets \atantwo(\vc{x}\arridx{1},\vc{x}\arridx{2})$ \;
    
    \KwReturn{$\{\theta\opt\}$}
}
\end{algorithm}
\subsection{Syncing Human and Robot Frames}\label{app: syncing human and robot body centric frames}

We propose a calibration-free procedure to map human and robot upper body and wrist keypoints into the robot's $0$th frame, which we treat as a shared reference frame; we call this the \textit{body-centric frame}.
This is necessary because human keypoints are typically given in an arbitrary frame defined by an input streaming device (e.g., VR headset or external camera).

\subsubsection{Creating the Body-Centric Frame}
To put human upper body keypoints in the $0$th frame, we create the transformations $(\orthoframe{\streaminput},\frameorigin{\streaminput})$, where ``$\streaminput$'' refers to the input streaming frame.
Suppose we are given left and right shoulder keypoints $\shoulder\lside\frm{\streaminput}, \shoulder\rside\frm{\streaminput}$ as input keypoints.
We define a lower body anchor point $\transvec\lbl{torso}\frm{\streaminput}$, typically a torso-fixed keypoint such as a vertebra position or the hip center, depending on the input streaming device.
Then, we create the body-centric frame with \Cref{alg: make frame}:
\begin{align}
    (\orthoframe{\streaminput},\frameorigin{\streaminput}) \gets \makeframe(\shoulder\lside\frm{\streaminput}, \shoulder\rside\frm{\streaminput}, \transvec\lbl{torso}\frm{\streaminput}).
\end{align}
Relative to the human torso, this treats the point between the shoulders as its origin, and front/left/up as the orthonormal frame direction order (i.e., ``X/Y/Z''), as shown in \Cref{fig: robot and human arm diagram}.

Humanoid robots typically come with a predefined upper body frame analogous to the human upper body frame, which we use as the $0$th or baselink frame.
However, if a robot uses a different convention, we must create the transformations $(\orthoframe{\robot}, \frameorigin{\robot})$, where ``$\robot$'' refers to the robot's default frame.
In this case, just as for the human, we define shoulder and torso keypoints  $(\shoulder\lside\frm{\robot}, \shoulder\rside\frm{\robot}, \transvec\frm{\robot})$ for the robot, then apply \Cref{alg: make frame} to redefine the $0$th frame:
\begin{align}
    (\orthoframe{\robot}, \frameorigin{\robot}) \gets \makeframe(\shoulder\lside\frm{\robot}, \shoulder\rside\frm{\robot}, \transvec\frm{\robot}).
\end{align}
Finally, we map all keypoints for the human and robot arms into the $0$th frame per \Cref{eq: frame transformation}.

\subsubsection{Creating Wrist Frames}
Different input streaming methods often use different conventions for wrist or hand orientation, necessitating the following preprocessing steps to get the hand orientation $\handorientation\frm{\streaminput}$.
Typically, we receive a hand orientation $\tilde{\handorientation}\frm{\streaminput}$ in the input frame with an arbitrary convention, but our method assumes that the hand orientation follows the right-hand rule (index finger extended / palm normal / thumb extended for ``X/Y/Z'').
To correct this, we apply an appropriate rotation $\rotmat\lbl{align}\frm{\streaminput}$ to get $\handorientation\frm{\streaminput}$ that obeys our assumption as
$\handorientation\frm{\streaminput} \gets \rotmat\lbl{align}\frm{\streaminput}\tilde{\handorientation}\frm{\streaminput}$
    
Often the hand orientation is not directly available as an input, but finger and wrist keypoints are.
In this case, we consider the index and little finger root keypoints $\indexfinger\frm{\streaminput}$ and $\pinkyfinger\frm{\streaminput}$ (typically at the intersection between the metacarpal bone and the proximal phalanges bone), and wrist $\wrist\frm{\streaminput}$.
Then we have
\begin{align}
    (\handorientation\frm{\streaminput},\transvec\frm{\streaminput}) \gets  \makeframe(\indexfinger\frm{\streaminput},\pinkyfinger\frm{\streaminput},\wrist\frm{\streaminput}).
\end{align}

Similar to the human, the robot's end effector mount orientation may not follow our right-hand rule convention, which we denote $\tilde{\toolorientation}(\config)$.
In this case, we again apply an appropriate rotation to get $\toolorientation(\config) \gets \rotmat\lbl{align}\frm{\robot}\tilde{\toolorientation}(\config)$.

\begin{algorithm}[ht]
\DontPrintSemicolon 
\caption{Make Frame from Keypoints \\
$(\rotmat,\transvec) \gets \makeframe(\vc{k}\lside,\vc{k}\rside,\vc{k}\lbl{b})$ \\
// Given three non-collinear keypoints in $\R^3$, output a coordinate frame parameterized by a rotation matrix and translation vector}
\label{alg: make frame}
\setstretch{1.1}

\KwInput{Left, right, and bottom keypoints $\vc{k}\lside,\vc{k}\rside,\vc{k}\lbl{b}$}

// Set translation vector between left/right keypoints

$\vc{p} \gets \tfrac{1}{2}(\vc{k}\lside + \vc{k}\rside)$

// Represent frame orientation as a rotation matrix

$\vc{u}\lbl{y} \gets \normalize(\vc{k}\lside - \vc{k}\rside)$

$\vc{u}\lbl{x} \gets \normalize(\vc{u}\lbl{y}\times (\vc{p} - \vc{k}\lbl{b}))$

$\vc{u}\lbl{z} \gets \vc{u}\lbl{x} \times \vc{u}\lbl{y}$

$\rotmat \gets [\vc{u}\lbl{x}, \vc{u}\lbl{y}, \vc{u}\lbl{y}]$

\KwReturn{Rotation matrix $\rotmat$ and translation vector $\transvec$}
\end{algorithm}

\subsection{Perpendicular Wrist}\label{app: perpendicular wrist}

For some robots, such as the Rainbow RB-Y1 and Unitree G1, the end effector mounting orientation is perpendicular to the final joint axis (not parallel as we assume in \Cref{subsubsec: robot arm preliminaries} or \Cref{fig: robot and human arm diagram}).
We adjust \Cref{alg: align wrist} as follows to accommodate this wrist type.
Instead of using Subproblems 1 and 2, perpendicular wrists can be solved with Euler angle decomposition.
We take the following steps to derive an Euler angle order ``A$_5/$A$_6$/A$_7$'' (e.g., ``X/Y/Z'' or ``X/Y/X'') of the final three joints.
First we represent the $7$\ts{th} frame orientation in the robot's $5$\ts{th} joint frame with the current joint configuration $\config$::
\begin{align}
    \rotmat\frms{5}{7}\lbl{des} &\gets \transbrac{\rotmat\frms{0}{5}(\config)} \rotmat\frms{0}{7}\lbl{des}
\end{align}
Next, we inspect the correspondence between the rotation axis $(\axis\idx{5}, \axis\idx{6}, \axis\idx{7})$ and the current $5$\ts{th} joint frame axis:
\begin{align}
    \vc{u}\idx{5} &= \axis\idx{5} \rotmat\frms{0}{5}(\config) \\
    \vc{u}\idx{6} &= (\rotmat\frms{5}{6}\local \axis\idx{6}) \rotmat\frms{0}{5}(\config) \\
    \vc{u}\idx{7} &= (\rotmat\frms{5}{6}\local \rotmat\frms{6}{7}\local \axis\idx{7}) \rotmat\frms{0}{5}(\config)
\end{align}
For each of the $5$th-$7$th joints, the order of the body frame Euler rotation depends on $\vc{u}\idx{i}$'s value $(i = 5,6,7)$:
\begin{align}
    \vc{u}\idx{i} &= \mat{1 & 0 & 0} \Rightarrow \regtext{A}\idx{i} \gets \regtext{``X''}  \\
    \vc{u}\idx{i} &= \mat{0 & 1 & 0} \Rightarrow \regtext{A}\idx{i} \gets \regtext{``Y''}  \\
    \vc{u}\idx{i} &= \mat{0 & 0 & 1} \Rightarrow \regtext{A}\idx{i} \gets \regtext{``Z''}, 
\end{align}
Then we use Euler decomposition to find the wrist angles
\begin{align}
    (\jointangle\idx{5}, \jointangle\idx{6}, \jointangle\idx{7}) \gets 
    \eulerangle(\rotmat\frms{5}{7}\lbl{des}, ``\regtext{A}\idx{5}/ \regtext{A}\idx{6}/\regtext{A}\idx{7}")
\end{align}
Finally, we compute $\toolorientation(\config)$ as in \Cref{subsubsec: robot arm preliminaries}.

\begin{rem}\label{rem: gimbal lock for perp wrist}
A limitation of this approach can arise when $\axis\idx{5}$ and $\axis\idx{7}$ become parallel, when $\jointangle\idx{6}=\pm 90\degree$ (i.e., gimbal lock).
However, we find that this rarely occurs during human to robot pose retargeting, because the active range of motion for a human wrist in the $\jointangle\idx{6}$ is typically well below $90\degree$ \cite{ryu1991functional}.
\end{rem}


\subsection{Optimality Proof}\label{app: optimality proof}

We take advantage of the fact that each closed-form subproblem algorithm is optimal (\Cref{lem: subproblem 4 optimality,lem: subproblem 2 optimality,lem: subproblem 1 optimality}).
Note, we work in reverse order from Subproblem 4 to 2 to 1, because the subproblems are ultimately used in this order in \Cref{alg: SEW-Mimic}.
The final result is then below in \Cref{prop: sew-mimic is optimal}.

\begin{lem}[Optimality of Solving Subproblem 4]
\label{lem: subproblem 4 optimality}
    Consider a vector $\vc{p}$, a unit vector $\vc{k}$ defining a rotation axis, and a unit vector $\vc{h}$ defining a plane offset from the origin by a distance $d$.
    \Cref{alg: subproblem 4} finds an optimal solution $\theta\opt$ to $\min_{\theta} \abs{\vc{h}\trans\rodrigues(\vc{k},\theta)\vc{p} - d}$.
\end{lem}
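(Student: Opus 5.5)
The plan is to reduce the problem to minimizing an affine function of the point $\vc{x} = [\sin\theta,\cos\theta]\trans$ over the unit circle, and then check each branch of \Cref{alg: subproblem 4} against the resulting one-dimensional problem.

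\emph{Step 1 (reduction).} Expand Rodrigues' formula \Cref{eq: rodrigues' formula}. Using $\skewmat(\vc{k})^2 = \vc{k}\vc{k}\trans - \eye_3$ for unit $\vc{k}$, we get
\begin{align*}
\rodrigues(\vc{k},\theta)\vc{p} = \vc{k}\vc{k}\trans\vc{p} + \sin\theta\,\skewmat(\vc{k})\vc{p} - \cos\theta\,\skewmat(\vc{k})^2\vc{p} = \vc{k}\vc{k}\trans\vc{p} + \vc{F}\vc{x},
\end{align*}
with $\vc{F}$ as defined in the algorithm. Hence $\vc{h}\trans\rodrigues(\vc{k},\theta)\vc{p} - d = \vc{A}\vc{x} - b$, with $\vc{A}$ and $b$ exactly as computed in \Cref{alg: subproblem 4}. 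The map $\theta \mapsto \vc{x}$ is onto the unit circle $S^1$, and $\theta = \atantwo(\vc{x}\arridx{1},\vc{x}\arridx{2})$ recovers $\theta$ from $\vc{x}$. So it suffices to show the algorithm returns angles whose $\vc{x}$ minimizes $\abs{\vc{A}\vc{x}-b}$ over $S^1$.

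\emph{Step 2 (range argument).} If $\vc{A} = \zeros$, the objective is constant, so every $\theta$ is optimal, including the returned one. Otherwise, by Cauchy--Schwarz, $\vc{A}\vc{x}$ sweeps exactly the interval $[-\norm{\vc{A}}_2, \norm{\vc{A}}_2]$ as $\vc{x}$ ranges over $S^1$. This gives two cases.
\begin{itemize}
\item \textbf{Case $\norm{\vc{A}}_2 \ge \abs{b}$.} The optimal value is $0$. Its minimizers are the points where the line $\vc{A}\vc{x} = b$ meets $S^1$. Write the least-squares point as $\vc{A}\pinv b = \vc{A}\trans b/\norm{\vc{A}}_2^2$, and note that $\vc{n} = [\vc{A}\arridx{2}, -\vc{A}\arridx{1}]\trans$ spans the nullspace of $\vc{A}$ and has $\norm{\vc{n}}_2 = \norm{\vc{A}}_2$. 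By Pythagoras, $\vc{A}\pinv b \pm (z/\norm{\vc{A}}_2^2)\,\vc{n}$ has unit norm, where $z = (\norm{\vc{A}}_2^2 - b^2)^{1/2}$, and both points satisfy $\vc{A}\vc{x} = b$. So these are the two intersection points, coinciding when $z = 0$.
\item \textbf{Case $\norm{\vc{A}}_2 < \abs{b}$.} The minimum is attained uniquely at $\vc{x} = \operatorname{sign}(b)\,\vc{A}\trans/\norm{\vc{A}}_2$. This point is a positive multiple of $\vc{A}\pinv b$, so $\atantwo$ applied to $\vc{A}\pinv b$ returns the correct angle.
\end{itemize}

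\emph{Step 3 (main obstacle: the scaling of the returned points).} The offset step in the algorithm is written as $\vc{x} \pm z\,\vc{n}$, without the factor $1/\norm{\vc{A}}_2^2$, and $\atantwo$ only sees the direction of its argument. I will resolve this by interpreting the offset as $z\,\vc{n}/\norm{\vc{A}}_2^2$, i.e., by normalizing consistently as the construction in \cite{elias2025ik} intends. With that reading, the returned $\vc{x}_{\pm}$ lie on $S^1$, and $\atantwo$ reproduces their angles exactly. I would state this normalization explicitly in the proof. The other bookkeeping points are the boundary case $z = 0$, which the algorithm routes to the second branch and which gives the same single solution, and the degenerate case $\vc{A} = \zeros$. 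Once these are settled, combining Steps 1--3 shows that every returned $\theta\opt$ attains $\min_\theta \abs{\vc{h}\trans\rodrigues(\vc{k},\theta)\vc{p} - d}$.
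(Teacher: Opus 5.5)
Your proposal is correct and follows the same route as the paper's proof. Both rewrite $\vc{h}\trans\rodrigues(\vc{k},\theta)\vc{p} = d$ as $\vc{A}\vc{x} = b$ with $\vc{x} = [\sin\theta,\cos\theta]\trans$, take the least-squares point when the line misses or is tangent to the unit circle, and otherwise offset along the nullspace of $\vc{A}$. Your version is more careful than the paper's sketch: the Cauchy--Schwarz range argument, the cases $\vc{A} = \zeros$ and $z = 0$, and especially the Step~3 normalization, which is needed. With $\vc{x} = \vc{A}\pinv b$, the printed offset $\pm z[\vc{A}\arridx{2},\,-\vc{A}\arridx{1}]$ gives wrong angles unless $\norm{\vc{A}}_2 = 1$ or $b = 0$; this can be fixed either by your $1/\norm{\vc{A}}_2^2$ factor or by using the unscaled point $\vc{A}\trans b$. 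The paper's proof glosses over this slip when it speaks of moving by a distance $z$.
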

\begin{proof}
    See \cite[Appendix A, Subproblem 4]{elias2025ik} for a detailed proof by construction, which we summarize here.
    First, note that the rotation of $\vc{p}$ about $\vc{k}$ traces a circle in 3-D, which intersects the plane defined by $\vc{h}$ at either 0, 1, or 2 points.
    The algorithm constructs a basis $\vc{F}$ for the plane in which this circle is embedded.
    We then find the closest points on that circle to the line created by the intersection with the plane defined by $\vc{h}$ and $d$, as a least-squares solution.
    To see this, first note that $       \rodrigues(\vc{k},\theta) = \vc{k}\vc{k}\trans + \sin\theta \skewmat(\vc{k}) - \cos\theta\skewmat(\vc{k})^2$.
    Then $\vc{h}\trans\rodrigues(\vc{k},\theta)\vc{p} = d$ can be written as a linear equation:
    \begin{align}
        \underbrace{\mat{\vc{h}\trans\skewmat(\vc{k})\vc{p},-\vc{h}\trans\skewmat(\vc{k})^2\vc{p}}}_{\vc{A}}
        \underbrace{\mat{\sin\theta \\ \cos\theta}}_{\vc{x}}
        &= \underbrace{d -\vc{h}\trans\vc{k}\vc{k}\trans\vc{p}}_{b},
    \end{align}    
    In the case of 0 or 1 intersection points, the solution is given by least squares: $\vc{x} = \vc{A}\pinv b$ and $\theta\opt = \atantwo(\vc{x}\arridx{1},\vc{x}\arridx{2})$.
    In the case of 2 intersection points, the plane intersects the circle traced by $\vc{p}$ about $\vc{k}$, with radius $(\norm{\vc{A}}_2^2 + b^2)^{1/2}$.
    The least-squares solution $\vc{x}$ lies inside that circle.
    Thus, one can find solutions on the circle by moving orthogonal to $\vc{x}$ (i.e., in the nullspace of $\vc{A}$) by a distance $z = (\norm{\vc{A}}_2^2 - b^2)^{1/2}$, resulting in two optimizers $\theta\opt\lbl{+}$ and $\theta\opt\lbl{--}$.
\end{proof}

\begin{lem}[Optimality of Solving Subproblem 2]
\label{lem: subproblem 2 optimality}
    Consider two unit vectors $\vc{u}, \vc{v} \in \R^3$ and two rotation axes $\vc{k}\idx{1},\vc{k}\idx{2} \in \R^3$.
    \Cref{alg: subproblem 2} finds optimal values of $\theta\idx{1}$ and $\theta\idx{2}$ that minimize
    $\similarity(\rodrigues(\vc{k}\idx{1},\theta\idx{1})\vc{u},\rodrigues(\vc{k}\idx{2},\theta\idx{2})\vc{v})$, where $\similarity$ is as in \Cref{eq: similarity metric}.
\end{lem}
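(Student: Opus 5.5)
The plan is to reduce the cosine-similarity objective to a pair of Subproblem 4 instances and then invoke \Cref{lem: subproblem 4 optimality}. Since $\vc{u},\vc{v}$ are unit vectors and rotations preserve norms, $\similarity(\rodrigues(\vc{k}\idx{1},\theta\idx{1})\vc{u},\rodrigues(\vc{k}\idx{2},\theta\idx{2})\vc{v}) = \tfrac{1}{2} - \tfrac{1}{2}\,\vc{a}(\theta\idx{1})\trans\vc{b}(\theta\idx{2})$, where $\vc{a}(\theta\idx{1}) = \rodrigues(\vc{k}\idx{1},\theta\idx{1})\vc{u}$ and $\vc{b}(\theta\idx{2}) = \rodrigues(\vc{k}\idx{2},\theta\idx{2})\vc{v}$. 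Moreover $\norm{\vc{a}-\vc{b}}^2 = 2 - 2\vc{a}\trans\vc{b}$. So minimizing $\similarity$ is equivalent to minimizing the Subproblem 2 objective \Cref{eq: subproblem 2}, and it suffices to show that \Cref{alg: subproblem 2} maximizes $\vc{a}\trans\vc{b}$ over the torus of angle pairs.

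The key geometric observation concerns the two circles. As $\theta\idx{1}$ varies, $\vc{a}$ traces a circle on the unit sphere with constant component $\vc{k}\idx{1}\trans\vc{a} = \vc{k}\idx{1}\trans\vc{u}$. As $\theta\idx{2}$ varies, $\vc{b}$ traces a circle with constant $\vc{k}\idx{2}\trans\vc{b} = \vc{k}\idx{2}\trans\vc{v}$. If the circles intersect, the maximum value $\vc{a}\trans\vc{b}=1$ is attained exactly when $\vc{a}=\vc{b}$ is a common point. Such a point lies on circle 1 and also satisfies $\vc{k}\idx{2}\trans\vc{a} = \vc{k}\idx{2}\trans\vc{v}$. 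This is precisely $\subproblem{4}(\vc{k}\idx{2},\vc{u},\vc{k}\idx{1},\vc{k}\idx{2}\trans\vc{v})$, and symmetrically for $\theta\idx{2}$. Hence the exact zeros of $\subproblem{4}$ returned in the two-intersection branch give exactly the optimal pairs. They are paired crosswise, $(\theta\idx{1,1}\opt,\theta\idx{2,2}\opt)$ and $(\theta\idx{1,2}\opt,\theta\idx{2,1}\opt)$, because the two intersection points are mirror images across the plane spanned by $\vc{k}\idx{1},\vc{k}\idx{2}$. This pairing has to be checked against the sign conventions in \Cref{alg: subproblem 4}. One also notes the tangent case, where a single intersection point appears and the least-squares branch returns it.

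The main obstacle is the non-intersecting case, where $\subproblem{4}$ returns only the least-squares angle. I would argue this by a symmetry/Lagrange argument. At a maximizer of $\vc{a}\trans\vc{b}$, stationarity in $\theta\idx{1}$ gives $\vc{b}\trans(\vc{k}\idx{1}\times\vc{a}) = 0$, and stationarity in $\theta\idx{2}$ gives $\vc{a}\trans(\vc{k}\idx{2}\times\vc{b}) = 0$. Together these imply that $\vc{a}$ and $\vc{b}$ both lie in the plane spanned by $\vc{k}\idx{1},\vc{k}\idx{2}$, i.e.\ the closest points of the two circles lie on that common symmetry plane.

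It then remains to show that the least-squares solution of \Cref{alg: subproblem 4} picks the point of circle 1 with $\vc{k}\idx{2}\trans\vc{a}$ closest to $\vc{k}\idx{2}\trans\vc{v}$. By \Cref{lem: subproblem 4 optimality} it does, and that point is the extreme point of circle 1 in the $\vc{k}\idx{2}$ direction. That extreme point lies in the symmetry plane, and likewise for circle 2. Finally, among the candidate in-plane pairs (each circle meets the plane in two antipodal-in-circle points), I would compare the four combinations directly. The pair of extreme points facing each other maximizes $\vc{a}\trans\vc{b}$, since each increases its component toward the other circle's constant level.

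I expect this last comparison, together with the degenerate cases ($\vc{k}\idx{1}\parallel\vc{k}\idx{2}$, or $\vc{u}\parallel\vc{k}\idx{1}$ where circle 1 collapses to a point), to require the most care. I would handle the degenerate cases separately by noting that the objective becomes independent of one angle, so any returned value is optimal.
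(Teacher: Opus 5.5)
Your skeleton matches the paper's: two \subproblem{4} calls, \Cref{lem: subproblem 4 optimality}, and the link between the $2$-norm and $\similarity$ for unit vectors. The paper's proof, though, only treats the case where the two circles traced by $\vc{a}=\rodrigues(\vc{k}\idx{1},\theta\idx{1})\vc{u}$ and $\vc{b}=\rodrigues(\vc{k}\idx{2},\theta\idx{2})\vc{v}$ meet: it asserts the residual is driven exactly to $0$ and uses ``zero norm implies zero cosine error''. Your identity $\norm{\vc{a}-\vc{b}}_2^2=2-2\vc{a}\trans\vc{b}$ and your disjoint-circle analysis therefore go beyond it.

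One of your steps is wrong, however. When $\vc{k}\idx{1}\parallel\vc{k}\idx{2}$ the objective is \emph{not} independent of one angle. For $\vc{k}\idx{1}=\vc{k}\idx{2}=\vc{k}$ it equals $\tfrac12-\tfrac12\vc{u}\trans\rodrigues(\vc{k},\theta\idx{2}-\theta\idx{1})\vc{v}$, and for antiparallel axes it depends on $\theta\idx{1}+\theta\idx{2}$, so the two angles must be coordinated. Yet both \subproblem{4} calls have $\vc{A}=\zeros$, hence $\vc{x}=\vc{A}\pinv b=\zeros$, and each independently returns $\atantwo(0,0)$. Take $\vc{k}\idx{1}=\vc{k}\idx{2}=(0,0,1)$, $\vc{u}=(1,0,0)$, $\vc{v}=(0,1,0)$: under the usual convention the output is $(0,0)$ with $\similarity=\tfrac12$, while $(\pi/2,0)$ gives $0$.

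So the lemma is false as stated in this case. You should add the hypothesis $\vc{k}\idx{1}\times\vc{k}\idx{2}\neq\zeros$ rather than argue the case is benign; your treatment of $\vc{u}\parallel\vc{k}\idx{1}$ is fine. The hypothesis costs nothing for \Cref{prop: sew-mimic is optimal}. In \Cref{alg: align axis} the two axes are consecutive, hence perpendicular, joint axes, and the vector rotated about $\vc{k}\idx{2}$ is the next joint axis. So the second circle is a great circle orthogonal to $\vc{k}\idx{2}$ and always meets the first, which is why the paper's exact-zero argument suffices there.

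Second, in the disjoint case your final comparison (``the pair of extreme points facing each other'') is asserted, not proved. Each \subproblem{4} call picks its point relative to the other circle's \emph{level set}, not relative to the other returned point, and joint optimality is exactly what must be shown. You can close this without the stationarity argument or the four-pair check:
\begin{enumerate}
\item Write $\angle(\cdot,\cdot)$ for the angle between unit vectors and set $\cos\beta\idx{2}=\vc{k}\idx{2}\trans\vc{v}$. The spherical triangle inequality gives $\min_{\vc{b}}\angle(\vc{a},\vc{b})=\abs{\angle(\vc{a},\vc{k}\idx{2})-\beta\idx{2}}$ for each $\vc{a}$ on the first circle.
\item Since $\cos$ is strictly decreasing on $[0,\pi]$, the first call, which minimizes $\abs{\vc{k}\idx{2}\trans\vc{a}-\cos\beta\idx{2}}$, returns some $\vc{a}\opt$ belonging to a globally optimal pair $(\vc{a}\opt,\hat{\vc{b}})$.
\item Optimality of that pair forces $\hat{\vc{b}}$ to minimize $\abs{\vc{k}\idx{1}\trans\vc{b}-\vc{k}\idx{1}\trans\vc{u}}$ over the second circle.
\item For disjoint circles this minimizer is an extremum of the sinusoid $\theta\idx{2}\mapsto\vc{k}\idx{1}\trans\rodrigues(\vc{k}\idx{2},\theta\idx{2})\vc{v}$. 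That sinusoid is nonconstant (unless the second circle is a point) precisely because $\vc{k}\idx{1}\not\parallel\vc{k}\idx{2}$, so the extremum is unique and the second call returns $\hat{\vc{b}}$.
\end{enumerate}

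In the intersecting case, the crosswise pairing likewise does not follow from mirror symmetry alone. Form $\vc{A}$ as in \Cref{alg: subproblem 4} with $\vc{h}=\vc{k}\idx{2}$, $\vc{p}=\vc{u}$, $\vc{k}=\vc{k}\idx{1}$. One checks that $(\vc{k}\idx{1}\times\vc{k}\idx{2})\trans\rodrigues(\vc{k}\idx{1},\theta)\vc{u}=\vc{A}\arridx{2}\sin\theta-\vc{A}\arridx{1}\cos\theta$. Hence the offset $\pm z\,[\vc{A}\arridx{2},-\vc{A}\arridx{1}]$ selects the intersection point on the $\pm(\vc{k}\idx{1}\times\vc{k}\idx{2})$ side, and swapping the axes' roles flips this sign.

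That slot assignment, with the plane normal as $\vc{h}$, is the \method{ik-geo} convention you implicitly use. Read literally, \Cref{alg: subproblem 2} puts $\vc{k}\idx{2}$ in the $\vc{p}$ slot of \Cref{alg: subproblem 4}, which negates the returned angles.
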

\begin{proof}
    From \cite[Appendix A, Subproblem 2]{elias2025ik}, we have that \Cref{alg: subproblem 2} returns $\theta\lbl{1}$ and $\theta\lbl{2}$ that minimize the quantity $\norm{\rodrigues(\vc{k}\idx{1},\theta\idx{1})\vc{u} - \rodrigues(\vc{k}\idx{2},\theta\idx{2})\vc{v}}_2$ exactly to 0; the key idea is to use two instances of Subproblem 4, then rely on \Cref{lem: subproblem 4 optimality}.
    Then, to show that an optimizer of the 2-norm is also an optimizer of $\similarity$, notice that, for unit vectors $\vc{a}$ and $\vc{b}$, if $\norm{\vc{a} - \vc{b}}_2 = 0$, then $1 - \vc{a}\cdot\vc{b} = 0$. 
\end{proof}

\begin{lem}[Optimality of Solving Subproblem 1]\label{lem: subproblem 1 optimality}
Consider two unit vectors $\vc{u}$ and $\vc{v} \in \R^3$, and a rotation axis $\vc{k} \in \R^3$.
Assume all three vectors are not collinear.
Then, \Cref{alg: subproblem 1} finds an optimal angle $\theta\opt$ that minimizes $\norm{\rodrigues(\vc{k},\theta\opt)\vc{u} - \vc{v}}$.
\end{lem}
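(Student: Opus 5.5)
We need to prove that Subproblem 1 (\Cref{alg: subproblem 1}) minimizes $\norm{\rodrigues(\vc{k},\theta)\vc{u}-\vc{v}}$. The plan is to reduce this to a one-dimensional trigonometric optimization and then show the algorithm outputs its maximizer. Throughout we take $\vc{k}$ to be a unit vector, as in the statement of Subproblem 1.

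\textbf{Step 1: decompose along $\vc{k}$.} Write $\vc{u}=(\vc{u}\cdot\vc{k})\vc{k}+\vc{u}_\perp$ and $\vc{v}=(\vc{v}\cdot\vc{k})\vc{k}+\vc{v}_\perp$. Since $\rodrigues(\vc{k},\theta)$ fixes $\vc{k}$ and maps the plane $\vc{k}^\perp$ to itself, we have
\begin{align*}
\norm{\rodrigues(\vc{k},\theta)\vc{u}-\vc{v}}^2 = (\vc{u}\cdot\vc{k}-\vc{v}\cdot\vc{k})^2 + \norm{\vc{u}_\perp}^2+\norm{\vc{v}_\perp}^2 - 2\,\vc{v}_\perp\trans\rodrigues(\vc{k},\theta)\vc{u}_\perp .
\end{align*}
Only the last term depends on $\theta$. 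So minimizing the norm is the same as maximizing $\vc{v}_\perp\trans\rodrigues(\vc{k},\theta)\vc{u}_\perp$.

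\textbf{Step 2: use the non-collinearity hypothesis.} The assumption that the three vectors are not collinear is meant to guarantee $\vc{u}_\perp\neq\zeros$ and $\vc{v}_\perp\neq\zeros$, so that the normalizations $\hat{\vc{p}}_1,\hat{\vc{p}}_2$ in the algorithm are well defined. I would make explicit that this is the intended reading. If either projection were zero, the objective would be constant in $\theta$, every angle would be optimal, and the claim would hold trivially.

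Given nonzero projections, the quantity to maximize is $\norm{\vc{u}_\perp}\norm{\vc{v}_\perp}\cos\phi(\theta)$, where $\phi(\theta)$ is the signed angle in $\vc{k}^\perp$ from $\rodrigues(\vc{k},\theta)\hat{\vc{p}}_1$ to $\hat{\vc{p}}_2$. Rotating about $\vc{k}$ within this plane gives $\phi(\theta)=\phi_0-\theta$, where $\phi_0$ is the signed angle from $\hat{\vc{p}}_1$ to $\hat{\vc{p}}_2$ measured about $\vc{k}$. The maximum is therefore attained at $\theta=\phi_0$ modulo $2\pi$.

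\textbf{Step 3: check that the algorithm outputs $\phi_0$.} For unit vectors, $\norm{\hat{\vc{p}}_1-\hat{\vc{p}}_2}=2\sin(\alpha/2)$ and $\norm{\hat{\vc{p}}_1+\hat{\vc{p}}_2}=2\cos(\alpha/2)$, where $\alpha\in[0,\pi]$ is the unsigned angle between them. Hence $2\,\atantwo(\cdot,\cdot)$ returns exactly $\alpha=\abs{\phi_0}$.

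The sign is fixed by $\vc{k}\trans(\hat{\vc{p}}_1\times\hat{\vc{p}}_2)=\sin\phi_0$, so flipping the sign when this quantity is negative recovers $\phi_0$. This sign/orientation check is the step most likely to contain an error. Two points need care:
\begin{itemize}
\item the sign convention of $\rodrigues$ in \Cref{eq: rodrigues' formula}: rotation by positive $\theta$ must be counterclockwise about $\vc{k}$, which can be verified with $\rodrigues(\vc{k},\theta)\hat{\vc{p}}_1=\cos\theta\,\hat{\vc{p}}_1+\sin\theta\,\vc{k}\times\hat{\vc{p}}_1$;
\item the edge case $\alpha=\pi$, where the cross product vanishes and either sign of $\theta$ gives the same rotation, so the output is optimal regardless.
\end{itemize}
Combining Steps 1–3 shows that $\theta\opt$ is a global minimizer.
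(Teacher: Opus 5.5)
Your proposal is correct and takes essentially the paper's route. The paper only cites \cite{elias2025ik} with the key idea of projecting onto the plane perpendicular to $\vc{k}$ and applying planar trigonometry, and your Steps 1--3 spell this out: the split of the squared error into a $\theta$-independent part plus $-2\,\vc{v}_\perp\trans\rodrigues(\vc{k},\theta)\vc{u}_\perp$, the half-angle identity behind the $\atantwo$, and the sign test $\vc{k}\trans(\hat{\vc{p}}_1\times\hat{\vc{p}}_2)=\sin\phi_0$.

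One small correction to your side remark: if a projection vanished, the algorithm's $\normalize(\zeros)$ would be undefined, so the claim about the algorithm's output would not ``hold trivially.'' That is exactly why the non-collinearity hypothesis must be read as excluding both $\vc{u}\parallel\vc{k}$ and $\vc{v}\parallel\vc{k}$.
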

\begin{proof}
See \cite[Appendix A, Subproblem 1]{elias2025ik}.
The key idea is to project $\vc{p}\idx{1}$ and $\vc{p}\idx{2}$ into the plane perpendicular to $\vc{k}$, then apply basic planar trigonometry.
\end{proof}

Finally, we prove \Cref{prop: sew-mimic is optimal} (restated here).
\myPropLabel*
\begin{proof}
    We consider each cost term in \Cref{prob: retargeting problem}, which are all nonnegative.
    From \Cref{alg: SEW-Mimic} (\Cref{line: align upper arm}) and \Cref{lem: subproblem 2 optimality} we have that
    $\similarity(\upperarm,\rotmat\frms{0}{3}(\config)\axis\idx{3}) = 0$.
    Similarly, from \Cref{alg: SEW-Mimic} (\Cref{line: align lower arm}) and \Cref{lem: subproblem 2 optimality} we have that
    $\similarity(\lowerarm,\rotmat\frms{0}{5}(\config)\axis\idx{5}) = 0$.
    It remains to show that $\norm{(\toolorientation(\config)\trans\handorientation)^{1/2} - \eye}\lbl{F} = 0$, which can be shown by demonstrating that $\toolorientation(\config) = \handorientation$.
    Per \Cref{alg: align wrist}, the claim follows from \Cref{lem: subproblem 2 optimality,lem: subproblem 1 optimality};
    the use of Subproblem 2 ensures that the first column of the tool and hand orientation matrices are equal ($\toolorientation(\config)\arridx{:,1} = \handorientation\arridx{:,1}$), then Subproblem 1 ensures that second and third columns are equal ($\toolorientation(\config)\arridx{:,2:3} = \handorientation\arridx{:,2:3}$).
\end{proof}

Note, in the case of a perpendicular wrist as in \Cref{app: perpendicular wrist} we still have optimality (assume no gimbal lock per \Cref{rem: gimbal lock for perp wrist}), because $\rotmat\frms{5}{7}\lbl{des}$ is a valid rotation matrix by construction and Euler angle decomposition is exact.
\subsection{Safety Filter Details}\label{app: safety filter details}

\subsubsection{Collision Checking}
For each relevant collision pair of capsules $\capsule\idx{i},\capsule\idx{j}$, let
\begin{align}\label{eq: capsule collision check}
    (d,\vc{n}\idx{i},\vc{n}\idx{j}) = \collisioncheck(\capsule\idx{i},\capsule\idx{j}),
\end{align}
where $d$ is the signed distance between the capsules (negative when the capsules are intersecting) and $\vc{n}\idx{i}$ is the contact normal on $\capsule\idx{i}$ and similarly $\vc{n}\idx{j}$.
We compute $d$ per \cite[\S5.9.1]{ericson2004real}.
We approximate the contact normals $\vc{n}\idx{i}$ as follows.
First, note that standard implementations of capsule collision checking return the quantities
$(\vc{c}\idx{i},\vc{c}\idx{j},\tau\idx{i},\tau\idx{j})$, where $\vc{c}\idx{i}$ is the closest point on the line segment defining $\capsule\idx{i}$ to the line segment defining $\capsule\idx{2}$, and $\vc{c}\idx{2}$ similarly and the values $\tau\idx{i}$ and $\tau\idx{2}$ define where these points occur on their corresponding line segments.
That is, if $\capsule\idx{i} = \capsule(\vc{p}\idx{i,1},\vc{p}\idx{i,2},\radius\idx{i})$, then
$\vc{c}\idx{i} = \tau\idx{i}\vc{p}\idx{i,1} + (1-\tau\idx{i})\vc{p}\idx{i,2}$.
Then we approximate the contact normal for $\capsule\idx{i}$
$\vc{n}\idx{i} = \tau\idx{i} \cdot \normalize(\vc{c}\idx{2} - \vc{c}\idx{1})$,
and $\vc{n}\idx{j}$ similarly.

\subsubsection{Continuous Time Approximation for Collision Check}
\label{app: cont time coll check}

As robots move in continuous time, collision checking must consider the full path between the current pose and the desired pose.
For our safety filter, we propose a simple continuous collision check by linearly interpolating between the current keypoints and the desired keypoints using the function $\linspace$, which takes start point, end point (inclusive), and number of interpolated points as the inputs.
We summarize this process in \Cref{alg: continuous time collision}.
We first determine the number of interpolated points $n\interp\in\N$ by the distance between the keypoints and the radius of the capsules (\Cref{line: get init and des keypoints}--\Cref{line: compute n interp}).
The key idea is to update the desired pose with the first interpolated keypoints where collision occurs, such that colliding bodies do not ``jump'' over each other.
We first interpolate between the robot's current and desired SEW keypoints, and use them to represent links (\Cref{line: safety filter start make keypoints and capsules}--\Cref{line: safety filter end make keypoints and capsules}).
We then identify the first instance of collision in the interpolation (\Cref{line: safety filter start first collision}--\Cref{line: safety filter end first collision}), and return the corresponding collision volumes.

\begin{algorithm}[ht]
\DontPrintSemicolon 
\caption{Continuous-Time Collision Check \\
$\collisionvolumes \gets \conttimecollision(\config\init,\config\des)$ \\
// Given initial and desired configurations}
\label{alg: continuous time collision}
\setstretch{1.1}

\KwInput{Initial pose $\config\init$ and desired pose $\config\des$}

// Determine number of interpolation points by getting initial and final keypoints, finding max distance between corresponding keypoints, then dividing by capsule radius \newline
$\keypoints\init \gets \FK(\config\init)$ and
$\keypoints\des \gets \FK(\config\des)$ \label{line: get init and des keypoints}

// Let $\vc{\radius}$ be a vector of capsule radii corresponding to the keypoints, and of appropriate size, then compute: \newline
$\vc{n} = \norm{\keypoints\init - \keypoints\des} / \vc{\radius}$ // operations are elementwise \label{line: compute possible interp values}

$n\interp \gets \ceil{\max_i \vc{n}\arridx{i}}$ \label{line: compute n interp}

// Interpolate robot arm keypoints and make capsules \label{line: safety filter start make keypoints and capsules}

$\{\robotkeypoints[i]\}_{i=0}^{n\interp} 
\gets \linspace(\FK(\config\init),\FK(\config\des),n\interp+1)$

$\{\collisionvolumes_i\}_{i=1}^{n\interp}
\gets \{\makecapsules(\robotkeypoints[i])\}_{i=1}^{n\interp}$ \label{line: safety filter end make keypoints and capsules}

// Find the first collided interpolation
\label{line: safety filter start first collision}

\eIf{
    $\exists\ i \in \{1, \cdots, n\interp\} \st \collisionvolumes_i$ is in collision
}{$i^* = \underset{i\in \{1, \cdots, n\interp\}}{\min}\{\collisionvolumes_i \regtext{ is in collision}\}$}
{$i^* = n\interp$}
$\collisionvolumes \gets \collisionvolumes_{i^*}$\label{line: safety filter end first collision}

\KwReturn{Collision volumes $ \collisionvolumes$}
\end{algorithm}

\subsubsection{XPBD Iteration}
The XPBD iteration (\Cref{line: safety filter xpbd update} of \Cref{alg: safety filter}) is detailed in \Cref{alg: XPBD update}.
The overall idea is to push the robot's capsules out of collision.
The algorithm operates by accumulating a gradient for each link's pair of keypoints, computed using collision normals between each pair of possibly colliding links.
Then, each keypoint is adjusted by its corresponding gradient, weighted by the Lagrange multiplier for that pair of links.
Finally, since this procedure may change the link lengths, we adjust the links back to their original lengths (\Cref{alg: XPBD length projection}); in this case, we again apply XPBD but per link.

\begin{algorithm}[ht]
\DontPrintSemicolon 
\caption{$(\collisionvolumes,\{\lambda\idx{i,j}\}) \gets \XPBDupdate(\collisionvolumes,\{\lambda\idx{i,j}\})$}
\label{alg: XPBD update}
\setstretch{1.1}

\KwInput{Collision volumes $\collisionvolumes$, one multiplier $\lambda\idx{i,j}$} for each collision volume pair

// Require: Safety margin $d\lbl{min}$, activation distance $d\lbl{act}$, release distance $d\lbl{rel}$, per-joint weights $\weight\idx{k}$, per-link lengths $\length\idx{k}$, compliance $\alpha$

\For{each collision pair $(\capsule\idx{i},\capsule\idx{j}) \in \collisionvolumes\times\collisionvolumes$}{
    // Denote $\capsule\idx{i}$ as parameterized by $(\vc{p}\idx{i,1},\vc{p}\idx{i,2},\radius\idx{i})$ and $\capsule\idx{j}$ similarly by $(\vc{p}\idx{j,1},\vc{p}\idx{j,2},\radius\idx{j})$
    
    // Get signed distances and contact normals

    $(d,\vc{n}\idx{i},\vc{n}\idx{k}) \gets \collisioncheck(\capsule\idx{i},\capsule\idx{j})$

    // Check if in collision or hysteresis
    
    \If{$d \geq d\lbl{rel}$ OR ($d \geq d\lbl{act}$ AND $\lambda\idx{i,j} = 0$)}{
        $\lambda_{i, j} \leftarrow 0$ and \textbf{continue}
    }
    
    $c = d - d\lbl{min}$
    
    \If{$c < 0$}{
    \For{$k = i,j$}{
           
        $\vc{g}\idx{k,1}$ and $\vc{g}\idx{k,2} \gets \zeros_{3\times 1}$ // Init. gradients
    
        \If{$\capsule\idx{k}$ is an upper arm}{
            // Only accumulate elbow gradient 
    
            $\vc{g}\idx{k,2} \gets \vc{g}\idx{k,2} + \vc{n}\idx{k}$
        }
        \ElseIf{$\capsule\idx{k}$ is a lower arm or hand}{
            // Accumulate both keypoint gradients
    
            $\vc{g}\idx{k,1} \gets \vc{g}\idx{k,1} + (1-\norm{\vc{n}\idx{k}}_2)\vc{n}\idx{k}$
    
            $\vc{g}\idx{k,2} \gets \vc{g}\idx{k,2} + \vc{n}\idx{k}$
        }
    
        // Update multipliers, keypoints, capsules
    
        $\lambda\lbl{old} \gets \lambda\idx{i,j}$

        $\Delta\lambda \gets -(c + \alpha\lambda\lbl{old})/(\alpha + \sum_k \weight\idx{k}\norm{\vc{g}\idx{k}}_2^2)$
    
        $\lambda\idx{i,j} \gets \max\left(0,
                \lambda\lbl{old} + \Delta\lambda
            \right)$
    
        $\vc{p}\idx{k,1} \gets \vc{p}\idx{k,1} + \weight\idx{k}(\lambda\idx{i,j} - \lambda\lbl{old})\vc{g}\idx{k,1}$
        
        $\vc{p}\idx{k,2} \gets \vc{p}\idx{k,2} + \weight\idx{k}(\lambda\idx{i,j} - \lambda\lbl{old})\vc{g}\idx{k,2}$

        $\capsule\idx{k} \gets \capsulefunc(\vc{p}\idx{k,1},\vc{p}\idx{k,2},\radius\idx{k})$
    }
  }
}

// Fix links if keypoint adjustment changed lengths


$\lambda\idx{k} \gets 0\ \forall\ k = 1,\cdots,7$ // initialize per-link multipliers

$\collisionvolumes \gets \XPBDLengthUpdate(\collisionvolumes, \{\lambda\idx{k}\})$ // \Cref{alg: XPBD length projection}

\KwReturn{Updated capsules $\collisionvolumes$ and multipliers $\{\lambda\idx{i,j}\}$}
\end{algorithm}

\begin{algorithm}[ht]
\DontPrintSemicolon 
\caption{
$\collisionvolumes \gets \XPBDLengthUpdate(\collisionvolumes, \{\lambda\idx{k}\})$
}
\label{alg: XPBD length projection}
\setstretch{1.1}
\KwInput{Collision volumes $\collisionvolumes$, one multiplier $\lambda \idx{k}$ per link}
// Require: Per-link lengths $l\idx{k}$, per-joint weights $\weight\idx{k}$, compliance $\alpha$

\For{each collision volume $\capsule_{k} \in \collisionvolumes$ and length $l_k$}{
    // Denote $\capsule\idx{k}$ as parameterized by $(\vc{a},\vc{b},\radius)$

    // Compute current length and constraint violation
    
    $\vc{d} \gets \vc{b} - \vc{a}$; \quad $\ell \gets \|\vc{d}\|_2$
    
    \If{$\ell < \epsilon$}{\textbf{continue}}
    
    $c \gets \ell - l_k$
    
    // Compute gradients
    
    $\vc{g}\lbl{b} \gets \vc{d} / \ell$ and $\vc{g}\lbl{a} \gets -\vc{g}\lbl{b}$
    
    // Update multiplier, endpoints
    
    $\lambda\lbl{old} \gets \lambda\idx{k}$
    
    $\Delta\lambda \gets -(c + \alpha\lambda\lbl{old}) / (\alpha + \weight\lbl{a}\|\vc{g}\lbl{a}\|_2^2 + \weight\lbl{b}\|\vc{g}\lbl{b}\|_2^2)$
    
    $\lambda\idx{k} \gets \lambda\lbl{old} + \Delta\lambda$
    
    // Apply position corrections
    
    $\vc{a} \gets \vc{a} + \weight\lbl{a} \Delta\lambda \, \vc{g}\lbl{a}$;
    $\vc{b} \gets \vc{b} + \weight\lbl{b} \Delta\lambda \, \vc{g}\lbl{b}$

    $\capsule\idx{k} \gets \capsulefunc(\vc{a},\vc{b},\radius)$
}

\KwReturn{Updated capsules $\collisionvolumes$} 
\end{algorithm}

\begin{algorithm}[ht]
\DontPrintSemicolon 
\caption{Recover Tool Orientation \\
$\handorientation \gets \recovertoolori(\toolorientation,\tool)$ \\
// Given a tool frame orientation matrix and a desired alignment vector for x-axis in $\R^3$, compute a new desired tool frame orientation matrix}
\label{alg: recover tool}
\setstretch{1.1}

\KwInput{Current tool orientation and desired alignment vector $\toolorientation,\tool$}

// obtain the tool frame pointing direction as x-axis

$\vc{u}\lbl{x} \gets \toolorientation[:, 1]$

$\vc{u}\lbl{t} \gets \normalize(\tool)$

// find difference between x-axis and alignment vector

$\vc{k} \gets \vc{u}\lbl{x} \times \vc{u}\lbl{t}$

$\theta \gets \arccos{(\vc{u}\lbl{x}\trans \vc{u}\lbl{t})}$

$\handorientation \gets \rodrigues(\vc{k},\theta) \toolorientation$

\KwReturn{Desired tool frame orientation matrix $\handorientation$}
\end{algorithm}
\subsection{Hardware Setup Details}\label{app: hardware setup}

\subsubsection{Kinova Gen3 Dual Arm}
We use a custom-built dual Kinova Gen3 7-DOF arm setup, controlled by a laptop workstation with 24 core Intel i9 CPU and NVIDIA RTX 4090 GPU.
Each arm is equipped with a Psyonic Ability hand (6-DOF) as its end effector, each controlled by Teensy 4.0 microcontrollers and the Psyonic Ability API.
The robot has a StereoLabs ZED X binocular camera as its head, mounted atop consisting of a 2-DOF neck (two Dynamixel XC330-M288-T servos), interfaced with the rest of the robot through an NVIDIA Jetson Orin NX (also used for video streaming).
The depth camera streams to a Meta Quest 3 VR headset.
For a user to control the head, camera azimuth and elevation from the headset are sent as desired joint angles to the neck servos.

\subsubsection{Rainbow Robotics RB-Y1}

We used RB-Y1m v1.3 equipped with an omni-directional base and a pair of full spherical perpendicular wrist.
The robot has a 2 DoF neck with a StereoLabs ZED 2 binocular camera. 

In demo videos, we also used RB-Y1a v1.1 equipped with a differential drive base and a pair of spherical parallel wrist (similar to Kinova Gen3).
A pair of 12 DoF X-Hand was used in demo videos and the hand was controlled using Open-Television \cite{cheng_open-television_2024}.

\subsection{Retargeting Performance Additional Details}\label{app: retargeting exp details}

We run this experiment using MuJoCo simulator \cite{todorov2012mujoco} on a desktop workstation with Intel(R) Core(TM) i9-13900K CPU, NVIDIA RTX 4090 GPU, and 128 GB RAM.

On the same dataset discussed in the main text, we compare \ourmethod and \method{GMR} on the metric from \cite[Eq. (1)]{araujo2025retargeting}, as shown in \Cref{fig: gmr loss evaluation}.
As expected, \ourmethod is significantly worse in this metric.

\begin{figure}[t]
    \centering
    \includegraphics[width=0.5\linewidth]{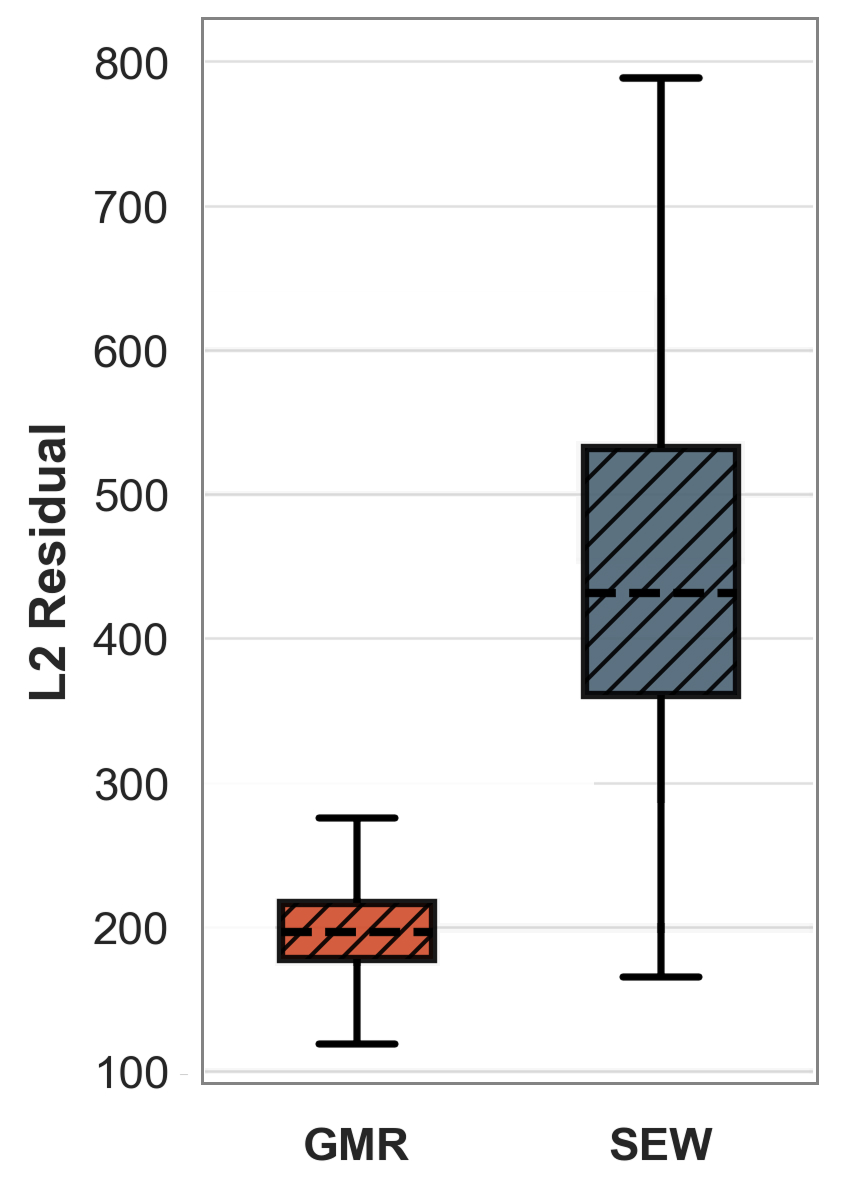}
    \caption{
    Evaluation of \ourmethod and \method{GMR} according to the \method{GMR}'s optimization loss formulation \cite[Eq. (1)]{araujo2025retargeting}. 
    } 
    \label{fig: gmr loss evaluation}
\end{figure}

\subsection{User Study Additional Details}\label{app: user study details}

We run this experiment using Robosuite \cite{zhu2020robosuite} on a desktop workstation with Intel(R) Core(TM) i9-13900K CPU, NVIDIA RTX 4090 GPU, and 128 GB RAM.

All tasks are visualized in \Cref{fig: task environment showcase}.

Before testing each control method, we allow the user five minutes to accustom themselves to the control method. 
Before each teleoperation task, we provide verbal instructions which informs users the goal of the task and obstacles to avoid. 
Each user is given five minutes to complete the task successfully in as many attempts as they are able to do.  
During user operation for each task attempt, we measure whether the goal was achieved successfully or not.
Task attempts were marked as successful when the goal object (the red cube) entered the goal area (the translucent green cube).
Attempts were alternatively marked as failures when the goal object reached an unrecoverable position, such as having fallen off the table, or when time for completion runs out during the trial.
In the case of Glass Gap (\Cref{fig: glass gap sew vs mink}), we add an additional case of marking the attempt as a failure when either stack of wine glasses are knocked over by the robot arm. 

\begin{table*}[ht]
\centering
\caption{User study data (bold values are best per task/controller).}
\label{tab: user study data}
\begin{tabular}{c|llll|llll|llll|}
\rowcolor[HTML]{C0C0C0} 
\cellcolor[HTML]{C0C0C0}                                     & \multicolumn{4}{c|}{\cellcolor[HTML]{C0C0C0}\textbf{Cabinet}}                                                                                                                                                                                         & \multicolumn{4}{c|}{\cellcolor[HTML]{C0C0C0}\textbf{Glass Gap}}                                                                                                                                                                                       & \multicolumn{4}{c|}{\cellcolor[HTML]{C0C0C0}\textbf{Handover}}                                                                                                                                                                                        \\ \cline{2-13} 
\rowcolor[HTML]{C0C0C0} 
\cellcolor[HTML]{C0C0C0}                                     & \multicolumn{2}{c|}{\cellcolor[HTML]{C0C0C0}\textbf{SEW}}                                                                 & \multicolumn{2}{c|}{\cellcolor[HTML]{C0C0C0}\textbf{MINK}}                                                                & \multicolumn{2}{c|}{\cellcolor[HTML]{C0C0C0}\textbf{SEW}}                                                                 & \multicolumn{2}{c|}{\cellcolor[HTML]{C0C0C0}\textbf{MINK}}                                                                & \multicolumn{2}{c|}{\cellcolor[HTML]{C0C0C0}\textbf{SEW}}                                                                 & \multicolumn{2}{c|}{\cellcolor[HTML]{C0C0C0}\textbf{MINK}}                                                                \\ \cline{2-13} 
\rowcolor[HTML]{C0C0C0} 
\multirow{-3}{*}{\cellcolor[HTML]{C0C0C0}\textbf{User}}      & \multicolumn{1}{c}{\cellcolor[HTML]{C0C0C0}\textbf{Success}} & \multicolumn{1}{c|}{\cellcolor[HTML]{C0C0C0}\textbf{Fail}} & \multicolumn{1}{c}{\cellcolor[HTML]{C0C0C0}\textbf{Success}} & \multicolumn{1}{c|}{\cellcolor[HTML]{C0C0C0}\textbf{Fail}} & \multicolumn{1}{c}{\cellcolor[HTML]{C0C0C0}\textbf{Success}} & \multicolumn{1}{c|}{\cellcolor[HTML]{C0C0C0}\textbf{Fail}} & \multicolumn{1}{c}{\cellcolor[HTML]{C0C0C0}\textbf{Success}} & \multicolumn{1}{c|}{\cellcolor[HTML]{C0C0C0}\textbf{Fail}} & \multicolumn{1}{c}{\cellcolor[HTML]{C0C0C0}\textbf{Success}} & \multicolumn{1}{c|}{\cellcolor[HTML]{C0C0C0}\textbf{Fail}} & \multicolumn{1}{c}{\cellcolor[HTML]{C0C0C0}\textbf{Success}} & \multicolumn{1}{c|}{\cellcolor[HTML]{C0C0C0}\textbf{Fail}} \\ \hline
\cellcolor[HTML]{EFEFEF}\textbf{1}                           & 2                                                            & \multicolumn{1}{l|}{2}                                     & 1                                                            & 2                                                          & 2                                                            & \multicolumn{1}{l|}{11}                                    & 0                                                            & 15                                                         & 3                                                            & \multicolumn{1}{l|}{3}                                     & 0                                                            & 7                                                          \\
\cellcolor[HTML]{EFEFEF}\textbf{2}                           & 6                                                            & \multicolumn{1}{l|}{1}                                     & 4                                                            & 0                                                          & 6                                                            & \multicolumn{1}{l|}{7}                                     & 3                                                            & 6                                                          & 6                                                            & \multicolumn{1}{l|}{4}                                     & 8                                                            & 0                                                          \\
\cellcolor[HTML]{EFEFEF}\textbf{3}                           & 10                                                           & \multicolumn{1}{l|}{0}                                     & 8                                                            & 0                                                          & 5                                                            & \multicolumn{1}{l|}{5}                                     & 0                                                            & 17                                                         & 4                                                            & \multicolumn{1}{l|}{2}                                     & 9                                                            & 0                                                          \\
\cellcolor[HTML]{EFEFEF}\textbf{4}                           & 3                                                            & \multicolumn{1}{l|}{1}                                     & 1                                                            & 2                                                          & 1                                                            & \multicolumn{1}{l|}{7}                                     & 0                                                            & 11                                                         & 6                                                            & \multicolumn{1}{l|}{0}                                     & 0                                                            & 2                                                          \\
\cellcolor[HTML]{EFEFEF}\textbf{5}                           & 8                                                            & \multicolumn{1}{l|}{0}                                     & 6                                                            & 0                                                          & 3                                                            & \multicolumn{1}{l|}{3}                                     & 0                                                            & 9                                                          & 5                                                            & \multicolumn{1}{l|}{0}                                     & 3                                                            & 0                                                          \\
\cellcolor[HTML]{EFEFEF}\textbf{6}                           & 7                                                            & \multicolumn{1}{l|}{0}                                     & 3                                                            & 0                                                          & 1                                                            & \multicolumn{1}{l|}{5}                                     & 1                                                            & 5                                                          & 3                                                            & \multicolumn{1}{l|}{1}                                     & 0                                                            & 4                                                          \\
\cellcolor[HTML]{EFEFEF}\textbf{7}                           & 8                                                            & \multicolumn{1}{l|}{0}                                     & 8                                                            & 0                                                          & 6                                                            & \multicolumn{1}{l|}{2}                                     & 2                                                            & 9                                                          & 7                                                            & \multicolumn{1}{l|}{1}                                     & 4                                                            & 0                                                          \\
\cellcolor[HTML]{EFEFEF}\textbf{8}                           & 10                                                           & \multicolumn{1}{l|}{0}                                     & 6                                                            & 0                                                          & 5                                                            & \multicolumn{1}{l|}{1}                                     & 0                                                            & 16                                                         & 8                                                            & \multicolumn{1}{l|}{2}                                     & 1                                                            & 4                                                          \\ \hline
\multicolumn{1}{r|}{\cellcolor[HTML]{EFEFEF}\textbf{total:}} & \textbf{54}                                                  & \multicolumn{1}{l|}{4}                                     & 37                                                           & 4                                                          & \textbf{29}                                                  & \multicolumn{1}{l|}{\textbf{41}}                           & 6                                                            & 88                                                         & \textbf{42}                                                  & \multicolumn{1}{l|}{\textbf{13}}                           & 25                                                           & 17                                                        
\end{tabular}
\end{table*}

\begin{figure}
    \centering
    \includegraphics[width=0.99\linewidth]{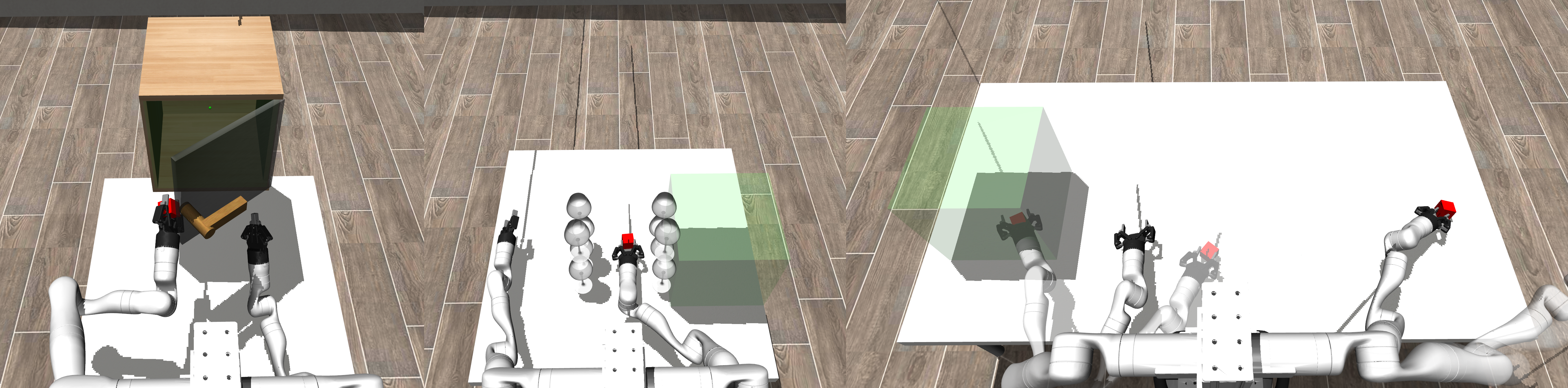}

    \includegraphics[width=0.99\linewidth]{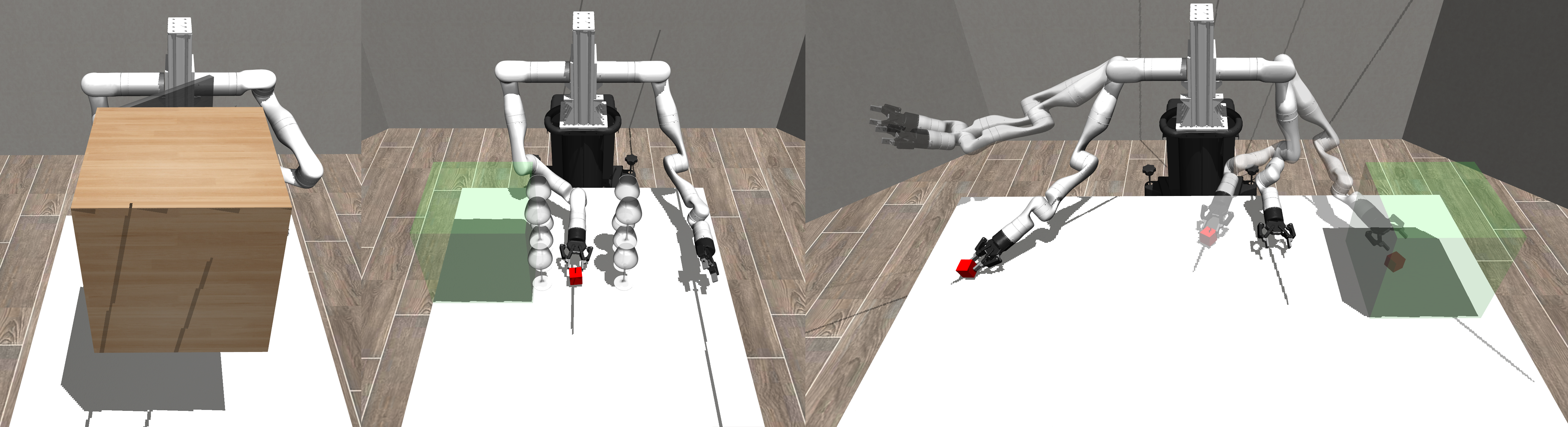}

    \includegraphics[width=0.99\linewidth]{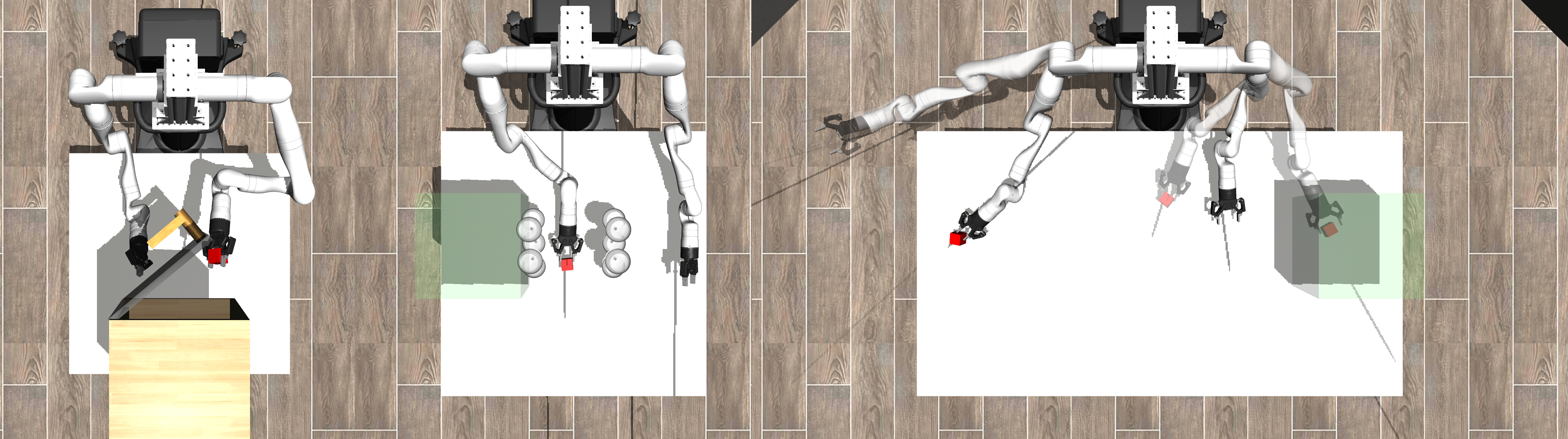}
    \caption{User experiment task environments. 
    We provide three tasks for users to complete during the user study, as described in \Cref{subsec: exp: user study}; each column shows one task from multiple views.
    Left: Cabinet.
    Middle: Glass Gap.
    Right: Handover.}
    \label{fig: task environment showcase}
\end{figure}

\subsection{Safety Filter Evaluation Additional Details}\label{app: safety filter evaluation details}

\begin{figure}[ht]
    \centering
    \begin{subfigure}{0.99\columnwidth}
    \includegraphics[width=0.99\columnwidth]{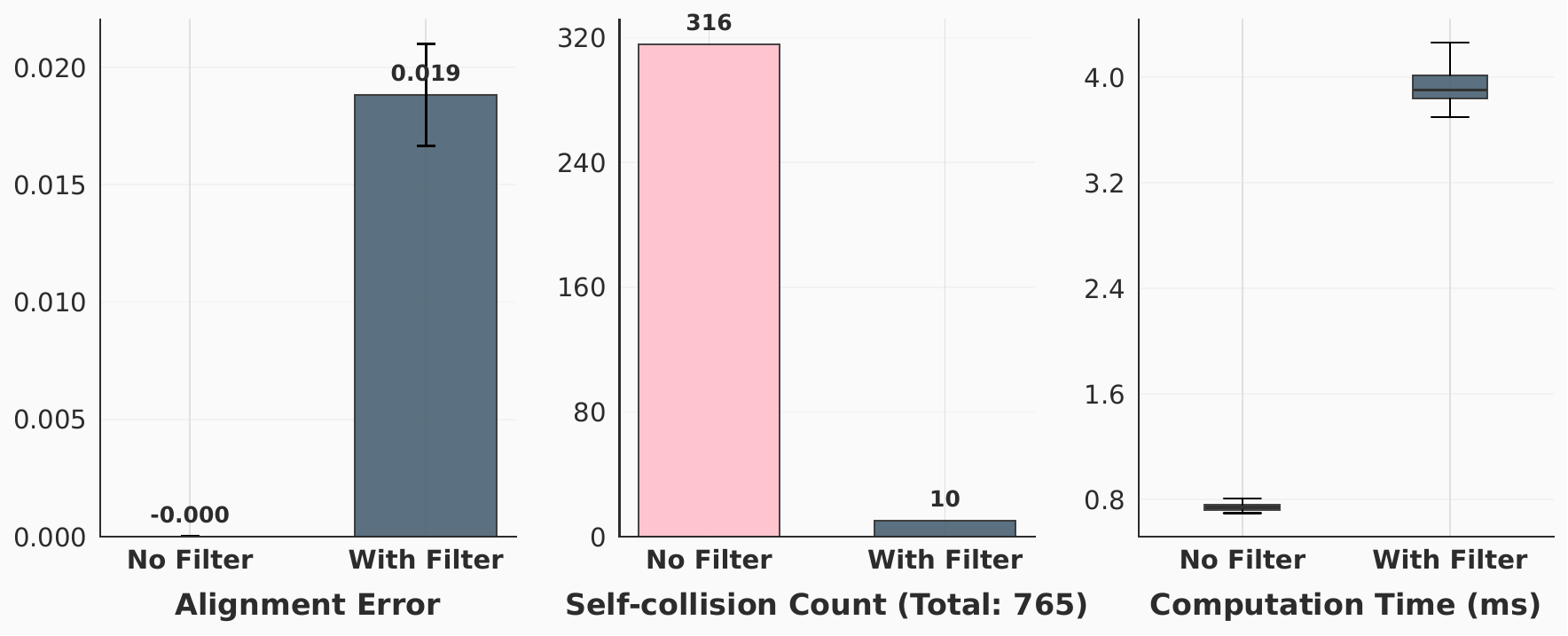}
    \end{subfigure}
    
    \caption{\ourmethod alignment error, collision instances, and computation time abolition study on whether safety filter is used in the pose-retargeting pipeline.}
    \label{fig: safety filter abolition}
\end{figure}

We run this experiment using MuJoCo simulator on a laptop workstation with Intel(R) Core(TM) i9-13950HK CPU, NVIDIA RTX 4090 GPU, and 64 GB of RAM.
Note that the GPU is not needed for \ourmethod or to run this experiment.

When creating capsules around the RB-Y1 robot upper body, we used the following capsule radius: torso - 0.15 m, shoulder - 0.12 m, upper arm - 0.06 m, lower arm - 0.065 m, wrist arm - 0.07 m. 
The length of each capsules are determined using their corresponding keypoints defined on the robot link. 

\subsection{Policy Learning Additional Details}\label{app: policy learning details}

\begin{figure}[ht]
    \centering
    \includegraphics[width=0.9\linewidth]{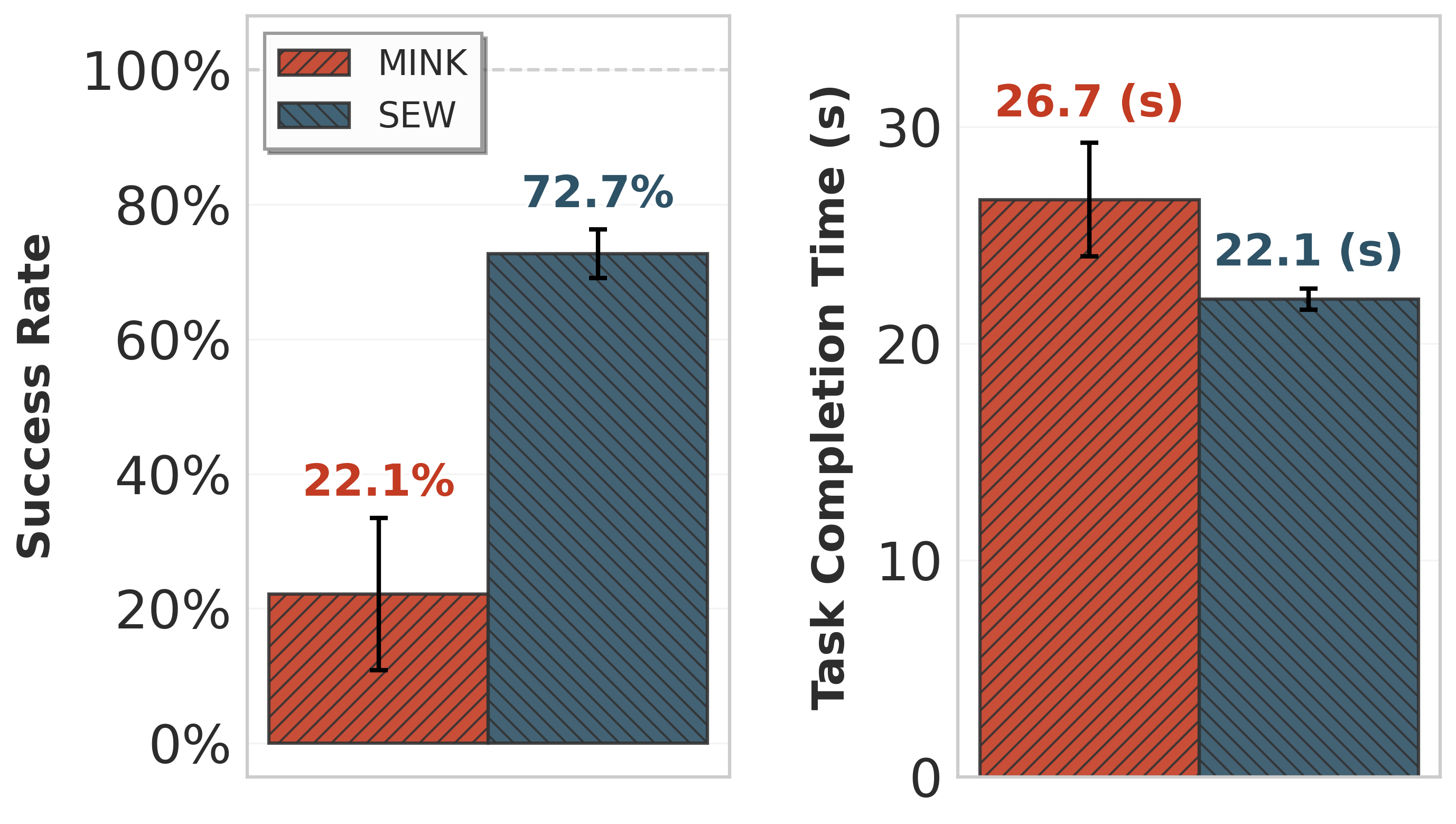}
    \caption{
    We train the same imitation learning policy on \textit{Glass Gap} demonstrations collected using \ourmethod or \basemink.
    \ourmethod yields higher success rate and lower task completion time.
    Bars show mean across training seeds; error bars indicate variability across seeds.
    } 
    \label{fig: glass_gap_eval}
    \vspace*{-0.5em}
\end{figure}

We run this experiment using MuJoCo simulator \cite{todorov2012mujoco} on a desktop workstation with Intel(R) Core(TM) i9-13900K CPU, NVIDIA RTX 4090 GPU, and 128 GB RAM.

For each representation, we trained for 600 epochs and selected the checkpoint with the best evaluation performance.
We report results over three random seeds, evaluated on 250 randomly sampled initial states.
To isolate the effect of action abstraction, we use the same number of demonstrations for both representations. 

\subsection{Full-Body Retargeting Additional Details}\label{app: exp: full body twist details}

We run this experiment using MuJoCo simulator on a desktop workstation with Intel(R) Core(TM) i9-13900K CPU, NVIDIA RTX 4090 GPU and 128 GB RAM.

We show in \Cref{fig: sew-twist-full-body-action} that \ourmethod provides dynamically feasible retargeting joint goals for TWIST without any fine-tuning.

\begin{figure}[ht]
    \centering
    \includegraphics[width=0.99\linewidth]{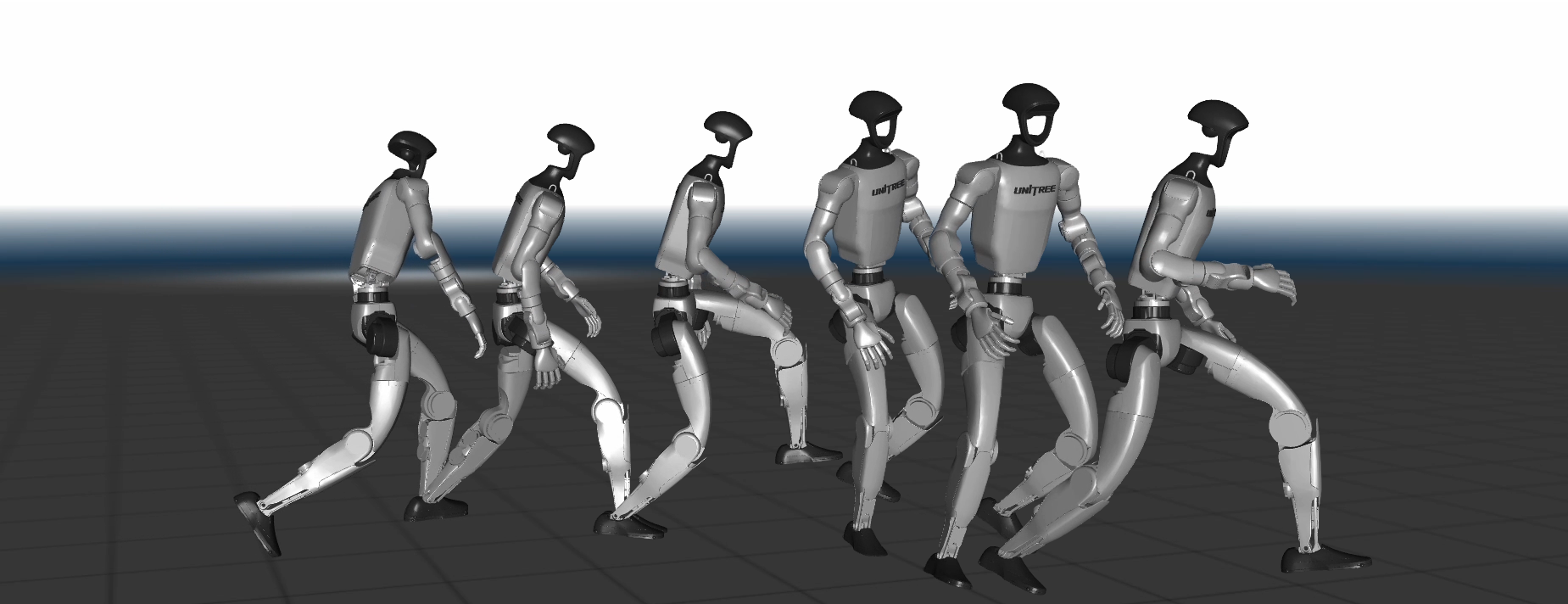}
    \caption{
    Inferencing LAFAN1 jumps1\_subject1 $(06:42 - 06:55)$ action on the SEW-TWIST pipeline for full body retargeting with dynamics.
    } 
    \label{fig: sew-twist-full-body-action}
    \vspace*{-0.5em}
\end{figure}




\subsection{Policy Learning Additional Experiment}

We run this experiment using MuJoCo simulator on a desktop workstation with Intel(R) Core(TM) i9-13900K CPU, NVIDIA RTX 4090 GPU.

\subsubsection{Hypothesis}
We hypothesize that using absolute SEW pose as an action abstraction improve sample efficiency and higher performance of trained policy.

\subsubsection{Experiment Design}
We process the \method{DexMimicGen} dataset \cite{jiang_dexmimicgen_2025} for the Fourier GR1 humanoid to extract SEW keypoints for the Pouring and Coffee tasks.
We then train a diffusion policy \cite{chi2025diffusion,reuss2023goal} $\policy_\theta(\action\idx{t}\!\mid\!\observation\idx{t})$ where the action $\action\idx{t}$ is the absolute SEW pose, and $\observation\idx{t}$ is the observation (RGB-D / proprioception).
We compare against the same policy trained on the same dataset with widely used action abstraction: desired end-effector poses with the default Mink representation \cite{jiang_dexmimicgen_2025}, and desired joint angles.
We measure task success and task completion time.

\subsubsection{Results and Discussion}
Task success rate is shown in \Cref{fig: policy learning success rate result}.
We see that the SEW representation performs similarly to data collected with \basemink on the Coffee task, but worse on the Pouring task; our preliminary investigation has not yet revealed the cause of this issue.
That said, the key takeaway we have is that the SEW representation warrants further investigation as an action representation.

\begin{figure}[ht]
    \centering
    \includegraphics[width=0.9\linewidth]{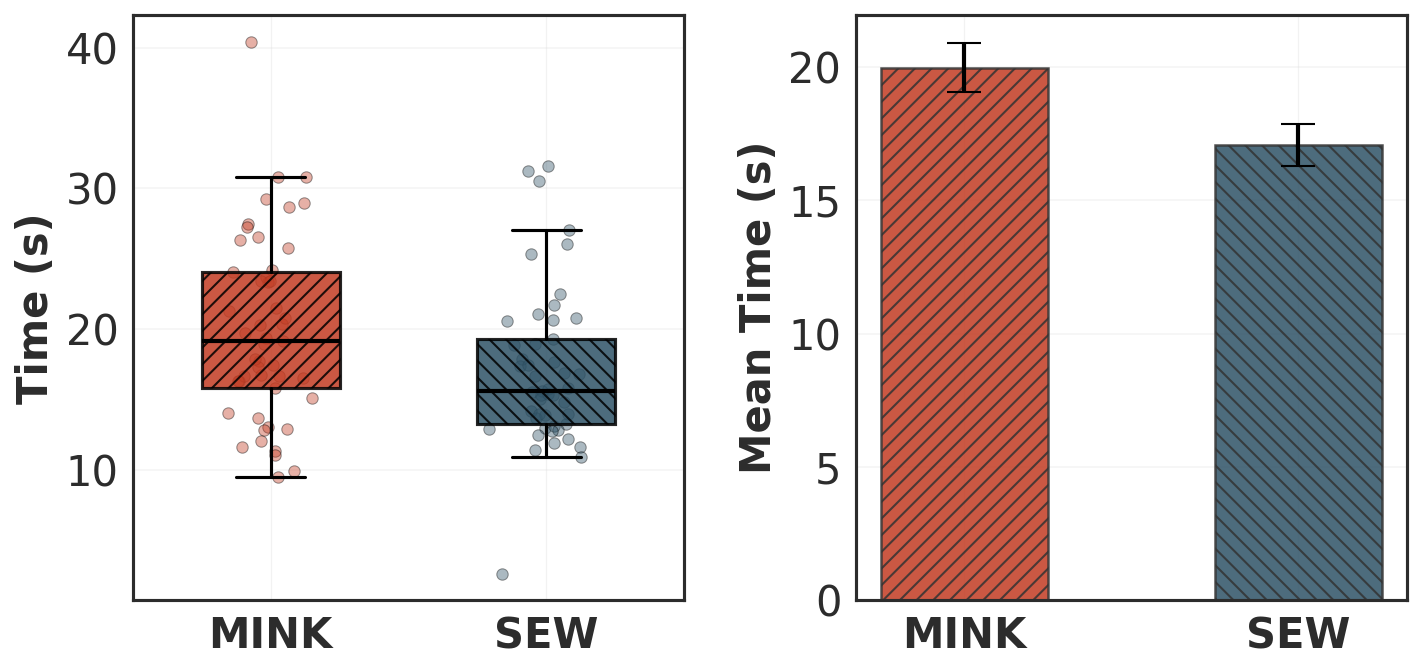}
    \caption{
    \textbf{Demo collection time.} Time required to collect a single demonstration with \basemink and \ourmethod. 
    The left panel shows the per-demonstration distribution (boxplot with individual trials); the right panel shows the mean with error bars indicating variability across demonstrations.
    }
    \label{fig:demo_collection_time}
\end{figure}

\begin{figure}[ht]
    \centering
    \includegraphics[width=0.9\linewidth]{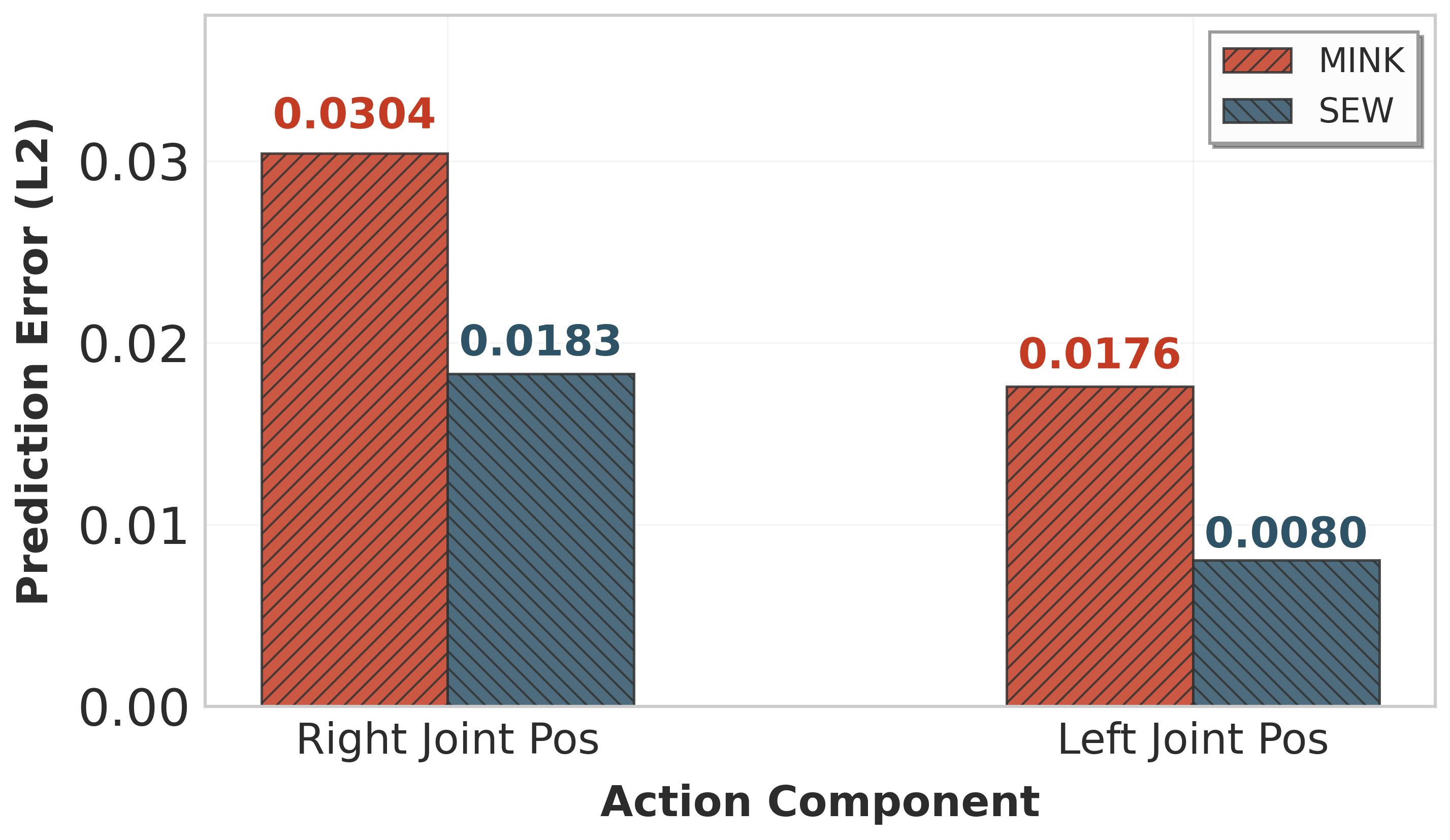}
    \caption{
    \textbf{Action prediction error.}
    We compare action prediction error for imitation policies trained on demonstrations collected with \basemink and \ourmethod. 
    From the expert dataset, we randomly sample observation--action-chunk pairs, predict the corresponding action chunk with the trained policy, and compute the mean squared error (MSE) between the predicted and ground-truth action chunks (reported as L2 error in the plot). 
    For each group, we evaluate the best-performing checkpoint. 
    We report errors separately for the left and right desired joint position components.
    }
    \label{fig:policy_learning_action_prediction_error}
\end{figure}

\begin{figure}[t]
    \centering

    \begin{subfigure}[t]{0.95\linewidth}
        \centering
        \includegraphics[width=\linewidth]{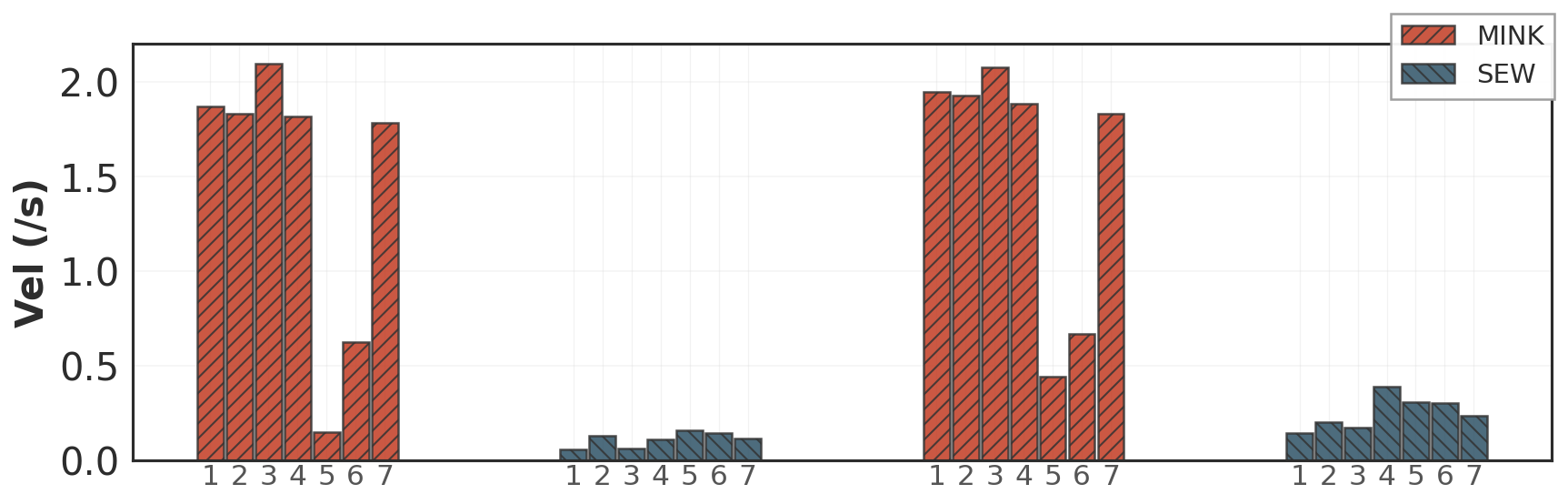}
        \caption{Per-joint commanded joint velocity magnitude for the left (L) and right (R) arms.}
        \label{fig:placeholder_top}
    \end{subfigure}

    \vspace{0.5em}

    \begin{subfigure}[t]{0.95\linewidth}
        \centering
        \includegraphics[width=\linewidth]{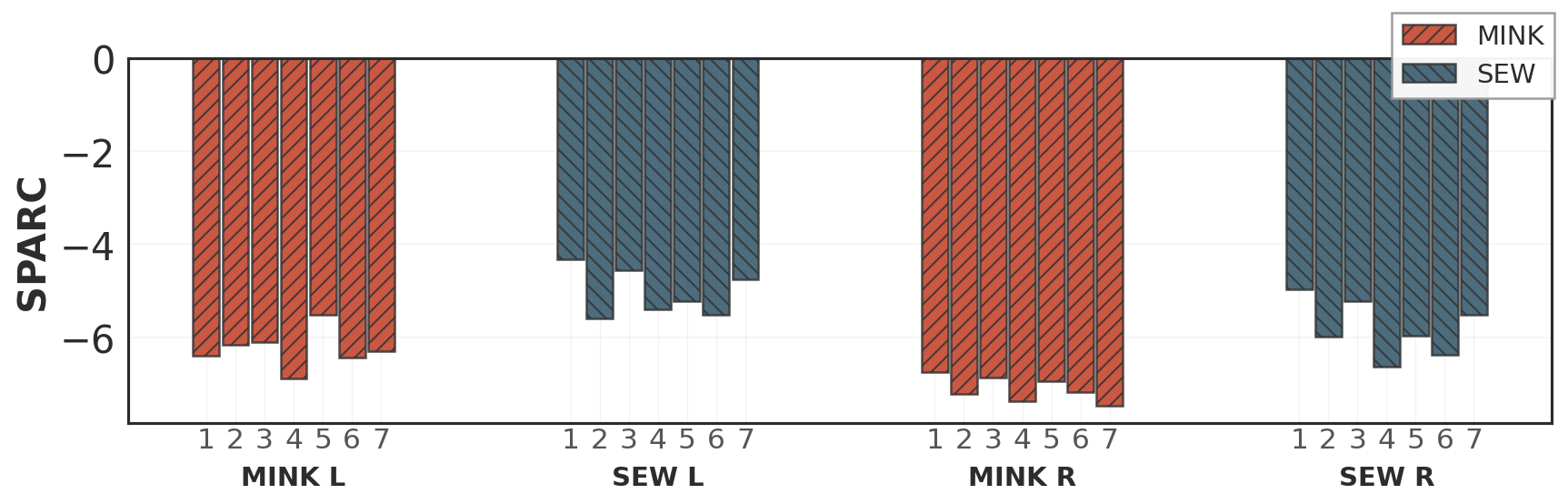}
        \caption{Per-joint trajectory smoothness measured by SPARC \cite{balasubramanian2015analysis} for the left (L) and right (R) arms.}
        \label{fig:placeholder_bottom}
    \end{subfigure}

    \caption{
    \textbf{Demonstration quality comparison.}
    We report (a) the magnitude of commanded joint velocity and (b) action-trajectory smoothness measured by SPARC \cite{balasubramanian2015analysis}, for both the left (L) and right (R) arms. Bars show the mean across demonstrations. Overall, \ourmethod produces lower commanded joint velocities and smoother trajectories (higher SPARC).}
    \label{fig:per_joint_comparison}
\end{figure}

\begin{figure}[ht]
    \centering
    \includegraphics[width=0.9\linewidth]{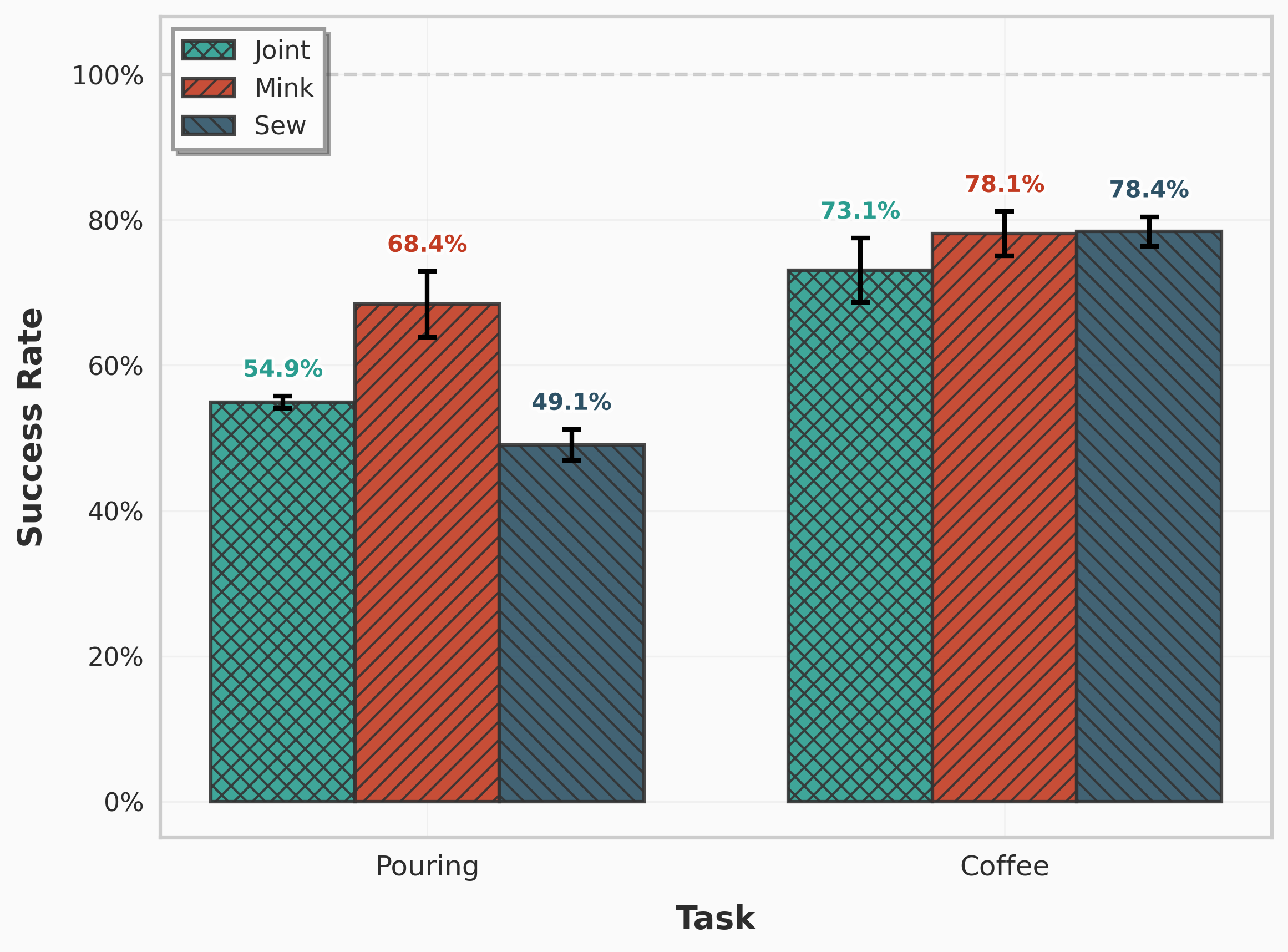}
    \caption{
    \textbf{Action abstraction ablation.} Success rate of Diffusion Policy \cite{chi2025diffusion} trained with three action representations—desired joint angles (Joint), end-effector pose commands via \basemink (Mink), and desired SEW (Sew)—on DexmimicGen tasks (Pouring, Coffee). Bars report mean performance across training seeds; error bars indicate variability across seeds.
    } 
    \label{fig: policy learning success rate result}
\end{figure}

\begin{figure}
    \centering
    \begin{subfigure}[t]{0.48\linewidth}
        \centering
        \includegraphics[width=\linewidth]{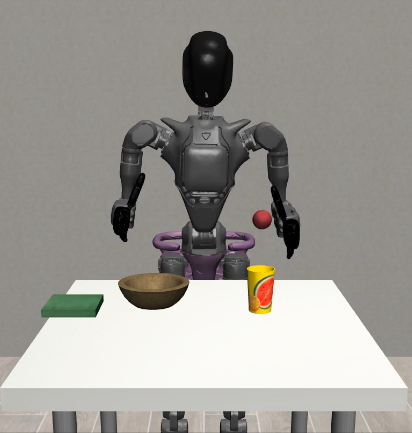}
        \caption{Pouring}
        \label{fig:pouring}
    \end{subfigure}\hfill
    \begin{subfigure}[t]{0.48\linewidth}
        \centering
        \includegraphics[width=\linewidth]{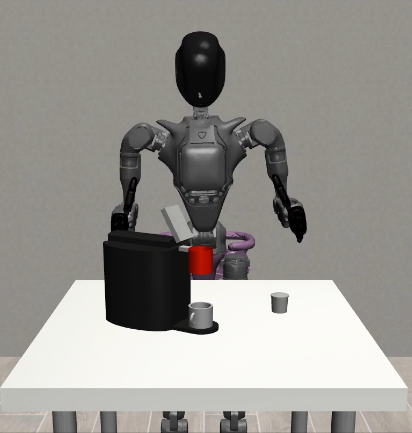}
        \caption{Coffee}
        \label{fig:coffee}
    \end{subfigure}

    \caption{
    \textbf{DexMimicGen tasks.}
    We evaluate on two bimanual manipulation tasks:
    (a) Pouring—transfer the ball from the yellow cup into the bowl, then place the bowl onto the green plate;
    (b) Coffee—grasp the coffee capsule, insert it into the coffee machine, and close the lid.
    }
    \label{fig:dexmimicgen_tasks}
\end{figure}

\begin{figure*}
    \centering
    \includegraphics[width=0.98\textwidth]{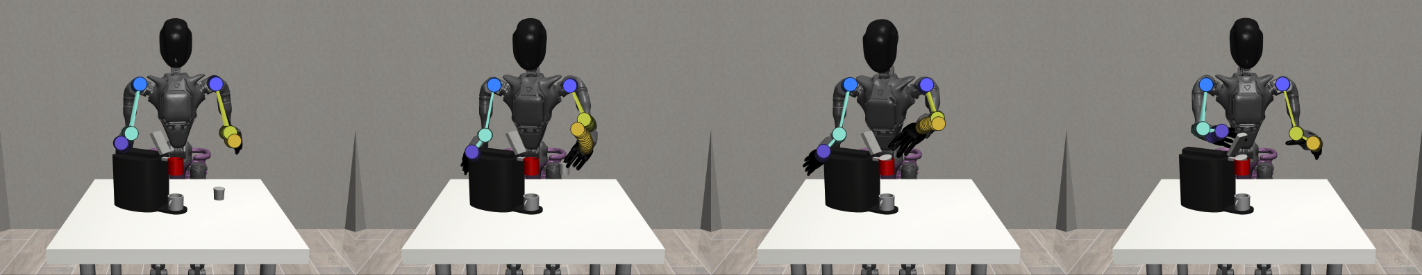}
    \caption{
    \textbf{Coffee task rollout with SEW actions.}
    We show four snapshots from a rollout of the Coffee task executed by a policy trained with the SEW action representation.
    Snapshots progress from the initial state (left) to task completion (right).
    Colored markers overlay the predicted SEW action poses at each timestep.
    }
    \label{fig:snapshot_sew_coffee}
\end{figure*}

\end{document}